\definecolor{darkblue}{rgb}{0,0,0.4}
\newtheorem{thm}{Theorem}[section]
\theoremstyle{remark}
\newtheorem{rem}[thm]{Remark}
\begin{document}

\title{The iterative convolution-thresholding method (ICTM) for image segmentation}
\author{Dong~Wang~and~Xiao-Ping~Wang
\thanks{D. Wang is with the Department of Mathematics, University of Utah, Salt Lake City, UT, 84112 USA. E-mail: dwang@math.utah.edu}
\thanks{X.-P. Wang is with the Department of Mathematics, Hong Kong University of Science and Technology, Hong Kong. E-mail: mawang@ust.hk}
\thanks{Manuscript received , 20xx; revised , 20xx.}
}


\maketitle

\begin{abstract}
In this paper, we propose a novel iterative convolution-thresholding method (ICTM) that is applicable to a range of variational models for image segmentation. A variational model usually minimizes an energy functional consisting of a fidelity term and a regularization term. In the ICTM, the interface between two different segment domains  is implicitly represented by their characteristic functions. The fidelity term is then usually written as a linear functional of the characteristic functions and the regularized term is approximated by a functional of characteristic functions in terms of heat kernel convolution.  This allows us to design an iterative convolution-thresholding method to minimize the approximate energy. The method is simple, efficient and enjoys the energy-decaying property. Numerical experiments show that the method is  easy to implement, robust  and applicable to various image segmentation models.
\end{abstract}

\begin{IEEEkeywords}
Convolution, thresholding, image segmentation, heat kernel
\end{IEEEkeywords}

\IEEEpeerreviewmaketitle

\section{Introduction}

Image segmentation is one of the fundamental tasks in image processing. In broad terms, it is the process of partitioning a digital image
into many segments according to a characterization of the image.
The motivation behind this is to determine automatically which part of an image is meaningful for analysis, which also makes it one of the fundamental problems in computer
vision. Many practical applications require image segmentation, like content-based
image retrieval, machine vision, medical imaging, object detection and traffic control systems \cite{mitiche2010variational}.

\medskip

{\it Various models for image segmentation:} Variational methods have enjoyed tremendous success in image segmentation. A typical variational method for image segmentation starts with choosing an energy functional over the space of all segmentations, minimizing which  gives a segmentation with desired properties. For instance, the {\bf  Mumford-Shah model} \cite{mumford1989optimal} uses the following formulation of energy:
\begin{align}\label{MS}
&\mathcal{E}_{MS}(u,\Gamma) \\
=& \lambda\int_D{(u-f)}^2dx + \mu\int_{D\setminus\Gamma}|\nabla u|^2dx + Length(\Gamma) , \nonumber
\end{align}
where $\Gamma$ is  a closed subset of $D$ given by the union of a finite number of curves representing the set of edges (i.e. boundaries of homogeneous regions) in the image $f$, $u$ is a piecewise smooth approximation to $f$, and $ \mu$ and $\lambda $ are positive constants. Despite its descriptiveness, the non-convexity of~(\ref{MS}) makes the minimization problem difficult to analyze and solve numerically \cite{ambrosio1990approximation}. 

To address this issue, a  useful simplification of (\ref{MS}) is to restrict the minimization to functions ({\it i.e.} segmentations) that take a finite number of values. The resulting model is commonly referred to as the {\bf piecewise constant Mumford-Shah model} \cite{chan2001active,vese2002multiphase}.  That is, the $n$ segments $\Omega_i$ ($i \in [n]$) can be obtained by minimizing the following $n$-phase  Chan--Vese (CV) functional~\cite{chan2001active, vese2002multiphase}:
\begin{align}
&\mathcal{E}_{CV}(\Omega_1, \ldots, \Omega_n, C_1, \ldots, C_n) \label{CV} \\
=&\lambda  \sum_{i=1}^n |\partial \Omega_i|+\sum_{i=1}^n \int_{\Omega_i}|C_i-f|^2 \ dx \nonumber
\end{align}
where $\partial \Omega_i$ is the boundary of the $i$-th segment $\Omega_i$, $|\partial \Omega_i|$ denotes the perimeter of the domain $\Omega_i$,  $\lambda$ is a positive parameter, and $C_i$ is the average of the image $f$ within $\Omega_i$ and is defined as follow: $$C_i = \frac{ \int_{\Omega_i} f \ dx }{ \int_{\Omega_i} 1 \ dx }.$$
Here and in the subsequent text, we use the notation $i \in [n]$ to denote $i = 1, 2, \ldots, n$.

When the intensity  inhomogeneity of the image serves, a {\bf  local intensity fitting (LIF) model} \cite{Li_2007,Chunming_Li_2008} was proposed  to overcome the segmentation difficulty caused by intensity inhomogeneity for two-phase problems. Generally, for the $n$-phase problem, the segmentation can be obtained by minimizing the following LIF energy with regularized terms:
\begin{align}
&\mathcal{E}_{LIF}(\Omega_1, \ldots, \Omega_n, C_1, \ldots, C_n) \nonumber \\
=&\lambda  \sum_{i=1}^n |\partial \Omega_i|\label{LBF} \\
+&\sum_{i=1}^n \mu_i \int_{\Omega}\int_{\Omega_i}G_{\sigma}(x-y)|C_i(x)-f(y)|^2 \ dydx \nonumber
\end{align}
where \begin{align}\label{kernel}
G_{\sigma}(x)=\frac{1}{4\pi\sigma} \exp(-\frac{|x|^2}{4\sigma}),
\end{align}
is a two-dimensional Gaussian kernel with standard derivation $\sigma$, $C_i(x)$ are intensity fitting functions, and $\lambda$ and $\mu_i$ are fixed parameters of the model.

Wang et al. \cite{Wang_2009} proposed a model combining the advantages of the CV model \eqref{CV} and the LIF model \eqref{LBF} by taking into account the local and global intensity information. They defined the {\bf  local global intensity fitting (LGIF)} energy functional with regularized terms for the $n$-phase problem:
\begin{align}
&\mathcal{E}_{LGIF}(\Omega_1, \ldots, \Omega_n, C_1, \ldots, C_n, I_1, \ldots, I_n)  \nonumber\\
=&\lambda  \sum_{i=1}^n |\partial \Omega_i|+ \omega \sum_{i=1}^n \int_{\Omega_i}|I_i-f|^2 \ dx \label{LGIF}\\
+&(1-\omega) \sum_{i=1}^n \mu_i \int_{\Omega}\int_{\Omega_i}G_{\sigma}(x-y)|C_i(x)-f(y)|^2 \ dydx  \nonumber
\end{align}
where $\omega$ is a positive constant ($0 \leq \omega \leq 1$), $C_i(x)$ are the intensity fitting functions, and $I_i$ are the average of the image $f$ within $\Omega_i$. Note that LGIF reduces to the CV model when $\omega=1$ and to the LIF model when $\omega=0$.

Recently, several {\bf  locally statistical active contour (LSAC)} models have also been proposed for image segmentation with intensity inhomogeneity. For example, Zhang et al. \cite{zhang2013local} considered the following model of intensity inhomogeneity:
\begin{align}
f(x)= b(x)I(x)+i(x)
\end{align}
where $b(x)$ is the bias field, $I(x)$ is the true signal to be restored, and $i(x)$ is the noise. 
Zhang et al. \cite{zhang2013local} proposed to minimize the following energy functional: 
\begin{align}
&\mathcal{E}_{LSAC}(\Omega_1, \ldots, \Omega_n, C_1, \ldots, C_n, \nu_i, \ldots, \nu_n, \rho, b) \nonumber \\ 
=& \sum_i^n \int_{\Omega_i} \int_\Omega I_\rho(x-y) \label{LSACno} \\
& \left(\log(\nu_i) +|f(x)-b(y)C_i|^2/2\nu_i^2\right)\  dydx  \nonumber
\end{align}
where $\nu_i$ is the standard variance of the noise $i(x)$, 
$$I_\rho(x) = \begin{cases} 1 \ \textrm{if} \ |x|<\rho,  \\ 0  \  \textrm{otherwise}, \end{cases} $$
and $\rho$ is a parameter in the kernel $I_\rho$.
One can also consider the following energy with regularized terms:
\begin{align} 
&\mathcal{E}_{LSAC}(\Omega_1, \ldots, \Omega_n, C_1, \ldots, C_n, \nu_i, \ldots, \nu_n, \rho, b) \nonumber \\ 
=& \lambda  \sum_{i=1}^n |\partial \Omega_i|+ \sum_i^n \int_{\Omega_i} \int_\Omega I_\rho(x-y)\label{LSAC} \\
& \left(\log(\nu_i) +|f(x)-b(y)C_i|^2/2\nu_i^2\right) \ dydx.   \nonumber
\end{align}

\medskip

{\it Existing numerical methods:}
Over the years, various numerical methods have been developed to solve above problems\cite{ambrosio1990approximation,braides1998approximation,tsai2001curve,vese2002multiphase}. For example, instead of solving the optimization problem directly, Bae et al. \cite{bae2011global} solved a dual formulation of the continuous Potts model based on its convex relaxation. Cai et al. \cite{cai2013two} proposed a two-stage segmentation method combining the split Bregman method \cite{goldstein2009split} for finding the minimizer of a convex variant of the Mumford-Shah functional with a K-means clustering algorithm to segment the image into $k$ segments. One of the advantages of this method is that the number of segments does not need to be specified before finding the minimizer.  Dong et al. \cite{dong2010frame} introduced a frame-based model in which the perimeter term was approximated by a term involving framelets.  The framelets were used to capture key features of biological structures. The model can be quickly implemented using the split Bregman method \cite{goldstein2009split}.

The  level-set method has been used by many authors to successfully implement the image segmentation models, which allowed automatic detection of interior contours (see \cite{Wang_2009} and references therein). 
However, reinitialization is usually needed to keep the level-set function regularized.  In addition,  the method introduces an artificial time step which must be relatively small for stability reasons. It is also difficult to generalize the method to multiphase segmentations.  

A phase-field approximation of the energy was proposed in \cite{esedog2006threshold} for the two-phase CV model,  in which the Ginzburg--Landau functional is used to approximate the perimeter of the domain. The resulting gradient flow, an Allen--Cahn-type equation, can be solved efficiently by many existing methods such as the convex splitting approach. It was also generalized to the Ginzburg-Landau energy functional on graphs using the graph Laplacian for semi-supervised learning models in a series of papers \cite{bertozzi2012diffuse,merkurjevsemi,merkurjev2014graph,garcia2014multiclass}.

In Wang et al. \cite{wang2016efficient}, we proposed a new iterative thresholding method for the image segmentation based on the multi-phase CV model. In that method, the interfaces between each two segments are implicitly determined by their characteristic functions and the regularized term is written into a nonlocal approximation based on the characteristic functions. Using the coordinate descent method combined with the sequential linear programming, we developed an unconditionally energy-decaying scheme to  solve the multi-phase CV model with an arbitrary number of segments and achieved promising results.

\medskip

{\it Novelty and contributions of this paper:} In this paper, we propose a novel framework that is applicable to a range of models for image segmentation. We consider a rather general energy functional consisting of a fidelity term and a regularized term for the $n$-phase image segmentation problem:
\begin{align}
\mathcal{E} = \sum_{i=1}^n \int_{\Omega_i} F_i(f, \Theta_1, \ldots, \Theta_n) \ dx + \lambda \sum_{i=1}^n |\partial \Omega_i| \label{EnergyGeneral}
\end{align}
where $\Theta_i = (\Theta_{i,1}, \Theta_{i,2}, \ldots, \Theta_{i,m})$ contains all possible variables or functions in fidelity terms.
The $F_i$ are quite general that will include the models  \eqref{CV}, \eqref{LGIF} and \eqref{LSAC} as special cases.
We then design a novel iterative convolution-thresholding method (ICTM) to minimize the general energy functional \eqref{EnergyGeneral}. We further prove the unconditionally energy-decaying property of the proposed algorithm. The proposed ICTM is simple and easy to implement. Numerical results show that the ICTM converges rapidly and is efficient,  robust  and applicable to a range of models for image segmentation. 

In particular, we compare the performance of  our method with that of the level-set method on several popular image segmentation models in 
 \cite{chan2001active, vese2002multiphase, zhang2013local, Chunming_Li_2008}. 
 Numerical results show that the number of iterations needed to reach the stationary state is greatly reduced using  ICTM, compared to those using the level set method.

\medskip

The paper is organized as follows. In Section~\ref{sec:prev}, we review some of the previous numerical methods related to the ICTM. The algorithm is then derived and the energy-decaying property is proved in Section~\ref{sec:der}.  In Section~\ref{sec:num}, we show numerical results to verify the high efficiency of the  ICTM. We then draw conclusions and give a discussion in Section~\ref{sec:con}.

\section{Previous work related to the proposed method}\label{sec:prev}

In 1992, Merriman, Bence, and Osher (MBO) \cite{merriman1992diffusion,merriman1994motion}
developed a threshold dynamics method for the motion of an interface driven
by the mean curvature. 
To be more precise, let $D \subset \mathbb{R}^n$ be a domain whose boundary
$\Gamma= \partial D$ is evolved via motion by mean curvature.
The MBO method is iterative and generates a new interface $\Gamma_{\text{new}}$ (or equivalently $D_{\text{new}}$) at each time step via the following two steps: \\

\noindent {\bf Step 1.} Solve the Cauchy initial value problem for the heat diffusion equation until time $t = \tau$, 
\begin{align*}
& u_t = \Delta u , \\
& u(t=0,\cdot) = \chi_{D},
\end{align*}
where $\chi_D$ is the characteristic function of domain $D$. Let $\tilde u(x) = u(\tau,x)$.\\ 

\noindent  {\bf Step 2.} Obtain a new domain $D_{\text{new}}$ with boundary $\Gamma_{\text{new}} = \partial D_{\text{new}}$  as follows:
\begin{align*}
D_{\text{new}} = \left\{ x\colon \tilde u (x) \geq \frac{1}{2} \right\}.
\end{align*} 

The MBO method has been shown to converge to the continuous motion by mean curvature \cite{evans1993convergence}. 
Esedoglu and Otto  generalized this type of method to multiphase flow with arbitrary surface tensions \cite{esedoglu2015threshold}. The method has attracted much attention thanks to its simplicity and unconditional stability. It has  subsequently been extended to many other applications including the problem of area- or volume-preserving interface motion \cite{ruuth2003simple}, 
image processing \cite{wang2016efficient,esedog2006threshold,merkurjev2013mbo}, 
problems of  anisotropic interface motions \cite{merriman2000convolution,ruuth2001convolution,bonnetier2010consistency,elsey2016threshold}, 
the wetting problem on solid surfaces \cite{xu2016efficient, wang2018efficient}, topology optimization \cite{chen2018efficient}, foam bubbles \cite{wang2018dynamics},
graph partitioning and data clustering  \cite{Gennip2013}, 
and auction dynamics \cite{jacobsauction}. 
Various algorithms and rigorous error analysis have been introduced to refine and extend the original MBO and related methods for the aforementioned problems (see, {\it e.g.}, \cite{deckelnick2005computation,esedoglu2008threshold,merriman1994motion,ishii2005optimal,ruuth1998diffusion,ruuth1998efficient}).
Adaptive methods based on non-uniform fast Fourier transform (NUFFT) \cite{nufft2,nufft6} have also been used to accelerate this type of method \cite{jiang2016nufft}.  Generalized target-valued diffusion-generated methods are recently developed in \cite{wang2018diffusion,osting2017generalized,osting2018}.

\section{Derivation of the method} \label{sec:der}
In this section, we derive the ICTM to minimize \eqref{EnergyGeneral}. 
For simplicity, we first derive the proposed ICTM for the two phase segmentation in Section~\ref{sec:deri2phase}.  The generalization of the method to the multi-segment case is quite straightforward as we show   in Section~\ref{sec:derimphase}, implying that the proposed ICTM is not sensitive to the number of segments. 

\subsection{Derivation of the ICTM for the two-segment case} \label{sec:deri2phase}
For simplicity, we describe the ICTM in the case of two-phase segmentation.  The ICTM is a region-based method.
In our method,  the first segment $\Omega_1$ is denoted by its characteristic function $u(x)$, {\it i.e.},
\begin{align}
u(x) := \begin{cases} 1 \ \textrm{if} \ x \in \Omega_1, \\ 
0 \ \textrm{otherwise}.\end{cases} 
\end{align}
Then the characteristic function of the second segment $\Omega_1^c$ is $1-u(x)$. Note that the interface between two segments is now implicitly represented by $u(x)$. 

As pointed by Esedoglu and Otto \cite{esedoglu2015threshold}, when $\tau \ll 1$, the length  of $\partial \Omega_1$ can be approximated by
\begin{align}
|\partial \Omega_1|\approx \sqrt{\frac{\pi}{\tau}}\int_{\Omega} u G_{\tau} *(1-u) \ dx, \label{approxPeri}
\end{align}
where $*$ represents convolution and $G_\tau$ is defined in \eqref{kernel}.
The rigorous proof of the convergence as $\tau \searrow 0$ can be found in Miranda et al. \cite{Miranda_2007}. 

The fidelity terms in $\mathcal{E}$ can then be written into an integral on the whole domain $\Omega$ by multiplying the integrand by $u$ or $1-u$. That is,
\begin{align*}\int_{\Omega_1} F_1 \ dx  = \int_{\Omega} u F_1 \ dx, \ 
\int_{\Omega_2} F_2\ dx  & = \int_{\Omega} (1-u) F_2 \ dx.
\end{align*}
Hence the total energy \eqref{EnergyGeneral} can be approximated by
\begin{align}
\mathcal{E} \approx \mathcal{E}^{\tau}(u,\Theta) \colon =\mathcal{E}_f (u,\Theta) +\mathcal{E}_r^{\tau} (u,\Theta) \label{energy12}
\end{align}
where 
$$\mathcal{E}_f (u,\Theta) =  \int_{\Omega} u F_1(f,\Theta)+(1-u) F_2(f,\Theta) \ dx$$ and  $$\mathcal{E}_r^{\tau}(u,\Theta)  = \lambda \sqrt{\frac{\pi}{\tau}}  \int_{\Omega} u G_{\tau} *(1-u) \  dx.$$
The $\Gamma$ convergence of $\mathcal{E}^{\tau}$ to $\mathcal{E}$ when $\tau \searrow 0$ can be proved similar to that in Esedoglu and Otto \cite{esedoglu2015threshold} or Wang et al. \cite{wang2018efficient} and thus the solution for the segmentation can be approximated by finding $u^{\tau,\star}$ such that
\begin{align}
(u^{\tau, \star}, \Theta^{\tau,\star}) = \arg\min_{u \in \mathcal{B}, \Theta \in \mathcal{S}} \mathcal{E}^\tau(u,\Theta)
\end{align} where 
\begin{align*}
\mathcal{B} \colon = \{u\in BV(\Omega,\mathbb{R}) \ |  \ u =\{0,1\} \}
 \end{align*} 
 and $BV(\Omega,\mathbb{R})$ denotes the bounded-variation functional space.

Now, we apply the coordinate descent method to minimize $\mathcal{E}^\tau(u,\Theta)$; that is, starting from an initial guess: $u^0$, we find the minimizers iteratively in the following order:
\[\Theta^0,u^1, \Theta^1, \ldots, u^k, \Theta^k, \ldots .\]
Without loss of generality, assuming that $u^k$ is calculated, we fix $u^k$ and find the minimizer of $\mathcal{E}^\tau(u^k,\Theta)$ to obtain $\Theta^k$. That is,
\begin{align}
\Theta^k & = \arg\min_{\Theta \in \mathcal{S}}  \mathcal{E}^\tau(u^k,\Theta).\label{min:theta}
\end{align}
Here and in the subsequent sections, we generally assume that for the $n$-phase case, the global minimizer of \[\sum_{i=1}^n \int_{\Omega_i} F_i(f, \Theta_1, \ldots, \Theta_n) dx \] exists and is unique on $\mathcal{S} =\mathcal{S}_1 \times\mathcal{S}_2\times \ldots\times \mathcal{S}_n $ where $S_i$ are the admissible sets for $\Theta_i$.
\begin{rem}
This assumption is reasonable for models for image processing because most of these models use strictly convex fidelity terms, such as those  in \eqref{CV}, \eqref{LBF}, \eqref{LGIF}, and \eqref{LSAC}.
\end{rem}

Because $\mathcal{E}_r^\tau$ is independent of $\Theta$, one only needs to find the global minimizers of $\mathcal{E}_f$ with respect to $\Theta$ to obtain $\Theta^k$. That is, 
\begin{align}
\Theta^k & = \arg\min_{\Theta \in \mathcal{S}}  \mathcal{E}_f (u^k,\Theta)\label{min:theta1} \\
& = \arg\min_{\Theta \in \mathcal{S}} \int_{\Omega} u^k F_1(f,\Theta)+(1-u^k) F_2(f,\Theta) \ dx. \nonumber
\end{align} 
This optimization problem can be solved in different ways for different types of functionals. For example, if $\mathcal{E}_f  $ is strictly convex and differentiable with respect to each element in $\Theta$, then each element  $\Theta_{i,j}$ ($i=1, 2, j \in [m]$) in $\Theta^k$ can be obtained via solving the following system of equations: 
\begin{equation} \label{syseqn}
\begin{cases}
\dfrac{\partial \mathcal{E}_f}{\partial\Theta_{1,1}} = 0,  \ \ldots, \ 
\dfrac{\partial \mathcal{E}_f}{\partial\Theta_{1,m}} = 0,  \\
\dfrac{\partial \mathcal{E}_f}{\partial\Theta_{2,1}} = 0,   \  \ldots,  \
\dfrac{\partial \mathcal{E}_f}{\partial\Theta_{2,m}} = 0.    \\
\end{cases}
\end{equation}
\begin{rem}
We use the notation $\frac{\partial (\cdot) }{\partial \cdot}$ to denote either variation (when $\Theta_{i,j}$  are scalar functions)  or derivative (when  $\Theta_{i,j}$ are  scalar variables). Then, \eqref{syseqn} can be efficiently solved using the Gauss--Seidel strategy similar to that in \cite{wang_2001} (see  examples in Section~\ref{sec:num}).
\end{rem} 

After solving $\Theta^k$, we then solve  $u^{k+1}$ by
\begin{align}
u^{k+1} = \arg\min_{u \in \mathcal{B}}  \mathcal{E}^\tau(u,\Theta^k). \label{min:u}
\end{align}

Note that the set $\mathcal{B}$ contains the  boundary points of the following convex set $\mathcal{K}$:
\begin{align*}
\mathcal{K} =& \{u\in BV(\Omega,\mathbb{R}) \ | \  u \in [0,1] \}.
\end{align*}
In other words, $\mathcal{K}$ is the convex hull of $\mathcal{B}$.

When $\Theta^k$ is fixed, it is easy to check that  $\mathcal{E}^\tau(u,\Theta^k)$ is a concave functional because $\mathcal{E}_f (u,\Theta^k)$ is linear and $\mathcal{E}_r^\tau(u,\Theta^k)$ is concave. Using the fact that the minimizer of a concave functional on a convex set can only be attained at the boundary points of the convex set and by finding a minimizer on a convex set $\mathcal{K}$, we relax the original problem \eqref{min:u} to the following equivalent problem \eqref{min:urelax}:
\begin{align}
u^{k+1} = \arg\min_{u \in \mathcal{K}}  \mathcal{E}^\tau(u,\Theta^k).  \label{min:urelax}
\end{align}
%

The sequential linear programming then leads to the following linearized problem:
\begin{align}
u^{k+1} = \arg\min_{u \in \mathcal{K}}  \mathcal{L}^{\tau}(f,\Theta^k,u^k,u) \label{min:ulinear}
\end{align}
where $\mathcal{L}^{\tau}(f,\Theta^k,u^k,u)$ is the linearization of $\mathcal{E}^\tau(u,\Theta^k)$ at $u^k$,
\begin{align}
&\mathcal{L}^{\tau}(f,\Theta^k,u^k,u)\colon   =  \int_{\Omega} u \phi \ dx  \label{min:ulinear}\end{align}
and $$
\phi =  F_1(f,\Theta^k)-F_2(f,\Theta^k) + \lambda \sqrt{\frac{\pi}{\tau}} G_{\tau} *(1-2u^k). \nonumber
$$

After the above relaxation and linearization, the optimization problem \eqref{min:u} is approximated by minimizing a linear functional over a convex set. Because $u(x)\geq 0$, it can be carried out in a pointwise manner by checking whether $\phi(x)>0$ or not.  That is, the minimum can be attained at
\begin{align}
u^{k+1}(x) = \begin{cases} 1 \ \  \textrm{if}  \  \phi(x)\leq 0,   \\
0 \ \  \textrm{otherwise}. \label{min1}
 \end{cases}
\end{align}

Now, combining \eqref{min:theta1} and \eqref{min1} yields Algorithm~\ref{a:MBO}.
\begin{algorithm}[ht]
\DontPrintSemicolon
 \KwIn{Let $\Omega$ be the image domain, $f$ be the image, $\tau > 0$, and $u^0 \in \mathcal{B}$.}
 \KwOut{A scalar function $u^s \in \mathcal{B}$ that approximately minimizes   \eqref{EnergyGeneral}.}
 Set $k=1$\;
 \While{not converged}{
{\bf 1.} For the fixed $u^k$, find
 \[\Theta^k = \arg\min_{\Theta \in \mathcal{S}} \int_{\Omega} u^k F_1(f,\Theta)+(1-u^k) F_2(f,\Theta) \ dx.\]
{\bf 2.} Use $\Theta^k$ from Step 1 and evaluate 
\[\phi^k(x) = F_1(f,\Theta^k)-F_2(f,\Theta^k) + \lambda \sqrt{\frac{\pi}{\tau}} G_{\tau} *(1-2u^k).\]
{\bf 3.} Set
\[u^{k+1}(x) = \begin{cases}1 \ \  \textrm{if}  \ \phi^k(x)\leq 0 ,   \\
0 \ \  \textrm{otherwise}.  \end{cases}\]
Set $k = k+1$\;
 }
\caption{An iterative convolution-thresholding method (ICTM) for approximating minimizers of the energy in \eqref{EnergyGeneral}. }
\label{a:MBO}
\end{algorithm}
\begin{rem}
In Theorem~\ref{thm:stability} below, we will prove that Algorithm~\ref{a:MBO} is unconditionally stable for any $\tau>0$. Since we are using characteristic functions to implicitly represent the interface between two segments, the criterion on the convergence of Algorithm~\ref{a:MBO} is $ \int_\Omega |u^k-u^{k-1}| \ dx< tol $ for a relatively small value of $tol$. In practice, because the image is defined in a discrete domain, the criterion for the convergence is that no pixel switches from one segment to the other between two iterations. 
\end{rem}

Theorem~\ref{thm:stability} below shows that  the total energy $\mathcal{E}^{\tau}(u,\Theta)$ decreases in the iteration for any $\tau > 0$. Therefore,
our iteration algorithm always converges to a stationary partition for any initial partition.

\begin{thm}[Stability]\label{thm:stability}
Let $(u^k,\Theta^k)$ be the $k$-th iteration derived
in Algorithm~\ref{a:MBO}. We have
\[\mathcal{E}^\tau(u^{k+1},\Theta^{k+1}) \leq \mathcal{E}^\tau(u^{k},\Theta^{k}) \]
for any $\tau$.
\end{thm}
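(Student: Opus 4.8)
The plan is to split each full iteration $(u^k,\Theta^k)\to(u^{k+1},\Theta^{k+1})$ into the two coordinate-descent half-steps $(u^k,\Theta^k)\to(u^{k+1},\Theta^k)\to(u^{k+1},\Theta^{k+1})$ and to show that the energy does not increase across either one. The second half-step is immediate: by the definition of $\Theta^{k+1}$ in Step~1 of Algorithm~\ref{a:MBO} (together with the standing assumption that this minimizer exists and is unique), and because $\mathcal{E}_r^\tau$ does not depend on $\Theta$, one has $\mathcal{E}^\tau(u^{k+1},\Theta^{k+1})=\mathcal{E}_f(u^{k+1},\Theta^{k+1})+\mathcal{E}_r^\tau(u^{k+1})\le\mathcal{E}_f(u^{k+1},\Theta^{k})+\mathcal{E}_r^\tau(u^{k+1})=\mathcal{E}^\tau(u^{k+1},\Theta^{k})$.

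So all the work is in the $u$-half-step, i.e.\ in showing $\mathcal{E}^\tau(u^{k+1},\Theta^k)\le\mathcal{E}^\tau(u^k,\Theta^k)$. First I would record, as already noted in the text, that with $\Theta^k$ frozen the map $u\mapsto\mathcal{E}^\tau(u,\Theta^k)$ is concave on the convex set $\mathcal{K}$: the fidelity part $\mathcal{E}_f(\cdot,\Theta^k)=\int_\Omega uF_1+(1-u)F_2\,dx$ is affine in $u$, while $\mathcal{E}_r^\tau(u)=\lambda\sqrt{\pi/\tau}\bigl(\int_\Omega u\,(G_\tau*1)\,dx-\int_\Omega u\,(G_\tau*u)\,dx\bigr)$ is an affine term minus the nonnegative (hence convex) quadratic form $u\mapsto\int_\Omega u\,(G_\tau*u)\,dx$, nonnegativity coming from the positivity of the Fourier transform of the heat kernel $G_\tau$. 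Next I would compute the first variation of $\mathcal{E}^\tau(\cdot,\Theta^k)$ at $u^k$; using the symmetry of $G_\tau$ this is exactly $\phi^k=F_1(f,\Theta^k)-F_2(f,\Theta^k)+\lambda\sqrt{\pi/\tau}\,G_\tau*(1-2u^k)$, the quantity thresholded in Steps~2--3.

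With these two facts I would invoke concavity — a concave functional lies below its tangent — to obtain, for every $v\in\mathcal{K}$,
\[
\mathcal{E}^\tau(v,\Theta^k)\ \le\ \mathcal{E}^\tau(u^k,\Theta^k)+\int_\Omega (v-u^k)\,\phi^k\,dx .
\]
Applying this with $v=u^{k+1}$, it remains to check $\int_\Omega(u^{k+1}-u^k)\,\phi^k\,dx\le 0$. But Step~3 of Algorithm~\ref{a:MBO} sets $u^{k+1}(x)=1$ exactly where $\phi^k(x)\le 0$ and $u^{k+1}(x)=0$ elsewhere, which is precisely the pointwise minimizer of the linear functional $u\mapsto\int_\Omega u\,\phi^k\,dx$ over $u\in\mathcal{K}$ (equivalently over $\mathcal{B}$); since $u^k\in\mathcal{B}\subset\mathcal{K}$ is a competitor, $\int_\Omega u^{k+1}\,\phi^k\,dx\le\int_\Omega u^k\,\phi^k\,dx$, giving the required sign. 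Chaining the two half-step inequalities yields $\mathcal{E}^\tau(u^{k+1},\Theta^{k+1})\le\mathcal{E}^\tau(u^{k+1},\Theta^k)\le\mathcal{E}^\tau(u^k,\Theta^k)$, and no step constrains $\tau$.

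The main obstacle is the $u$-half-step, and within it the two structural points that make the linearization-plus-thresholding argument legitimate: that $\mathcal{E}_r^\tau$ is genuinely concave (resting on positive-definiteness of the heat kernel) and that $\phi^k$ is exactly the first variation of $\mathcal{E}^\tau(\cdot,\Theta^k)$ at $u^k$, so the pointwise thresholding rule is an exact minimizer of the tangent functional over $\mathcal{K}$. Once those are pinned down the rest is a two-line chaining of inequalities, with the $\Theta$-half-step requiring nothing beyond the definition of $\arg\min$ and the stated existence/uniqueness assumption.
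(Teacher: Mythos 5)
Your proposal is correct and follows essentially the same route as the paper: the same two half-step decomposition, with the $\Theta$-step handled by the definition of the minimizer and the $u$-step by comparing the energy to its linearization $\phi^k$ and using that thresholding minimizes the linear functional. The only cosmetic difference is that the paper writes out the exact identity $\mathcal{E}^\tau(u^{k+1},\Theta^k)=\mathcal{L}^\tau(\cdots,u^{k+1})-\lambda\sqrt{\pi/\tau}\int_\Omega (u^k-u^{k+1})\,G_\tau*(u^k-u^{k+1})\,dx$ and proves nonnegativity of the remainder via the semigroup factorization $G_\tau=G_{\tau/2}*G_{\tau/2}$, whereas you invoke the abstract concavity (tangent-plane) inequality with positivity of the quadratic form justified through the Fourier transform of the heat kernel --- equivalent arguments resting on the same fact.
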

\begin{proof}
See Appendix \ref{append2}.
\end{proof}

As we will show by numerical examples in Section~\ref{sec:num}, the ICTM converges very fast and the number of iterations for convergence  is greatly reduced. One can understand this advantage of the ICTM as the follows: 
 The approximate energy functional \eqref{energy12} is the summation of a strictly convex functional (or, more generally, a functional with a global minimizer) with respect to $\Theta$ ({\it i.e.}, $\mathcal{E}_f$) and a concave  functional only dependent on $u$ ({\it i.e.}, $\mathcal{E}_r^\tau$). At the first step, $\Theta^k$ is the optimal choice to decrease the energy.  At the second and the third step, we find the minimizer of the linear approximation which is also the optimal choice to minimize the linearized functional. Moreover, the minimizer can give a smaller value in \eqref{energy12} because the graph of the functional $\mathcal{E}_r^\tau$ is always below its linear approximation. This accelerates the convergence of the ICTM.




%

\subsection{Derivation of the ICTM for the multi-segment case} \label{sec:derimphase}

To derive the ICTM for the $n$-segment case, we use $n$ characteristic functions and define
\begin{align}
u_i(x) = \chi_{\Omega_i}(x):= \begin{cases} 1 \ \textrm{if} \ x \in \Omega_i, \\ 
0 \ \textrm{otherwise},\end{cases}  \  i \in [n].
\end{align}
Then, we denote  $u = (u_1,u_2, \ldots, u_n)$ and define 
\begin{align*}
\tilde{\mathcal{B}} = \{& u\in BV(\Omega,\mathbb{R}^n) \ |\  u_i =\{0,1\}, \ i \in [n],  \\
& \ \textrm{and} \ \sum_{i=1}^n u_i(x) = 1 \}.
 \end{align*} 
In the $n$-segment case, similar to \eqref{approxPeri}, the measure of $\partial \Omega_i \cap \partial \Omega_j$ can be approximated by
\begin{align}
|\partial \Omega_i \cap \partial \Omega_j|\approx \sqrt{\frac{\pi}{\tau}}\int_{\Omega} u_i G_{\tau} *u_j \ dx, \nonumber
\end{align}
and thus the perimeter of $\Omega_i$ is approximated by 
\begin{align}
|\partial \Omega_i |\approx \sqrt{\frac{\pi}{\tau}} \sum\limits_{ j=1,j\neq i}^{n}  \int_{\Omega} u_i G_{\tau} *u_j \ dx.
\end{align}
Then, the total energy \eqref{EnergyGeneral} can be approximated by
\begin{align}
\mathcal{E}^{\tau}(u,\Theta) =\mathcal{E}_f(u,\Theta) +\mathcal{E}_r^{\tau} (u,\Theta) \label{energy1}
\end{align}
where 
$$\mathcal{E}_f(u,\Theta) =  \sum\limits_{i=1}^n \int_{\Omega} u_i F_i(f,\Theta) \ dx$$ and  $$\mathcal{E}_r^{\tau}(u,\Theta)  = \lambda \sqrt{\frac{\pi}{\tau}} \sum\limits_{i=1}^n \sum\limits_{ j=1,j\neq i}^{n} \int_{\Omega} u_i G_{\tau} *u_j \  dx.$$

Again, we apply the coordinate descent method to minimize $\mathcal{E}^\tau(u,\Theta)$; that is, starting from an initial guess $u^0$, we find the minimizers iteratively in the following order:
\[\Theta^0,u^1, \Theta^1, \ldots, u^k, \Theta^k, \ldots .\]
When $u^k$ is fixed, $\Theta^k$ can be obtained via 
\begin{align}
\Theta^k = \arg\min_{\Theta \in \mathcal{S}}  \mathcal{E}_f (u^k,\Theta). \label{min:theta}
\end{align}
Using the same relaxation and linearization procedure as  in Section~\ref{sec:deri2phase}, we arrive at 
\begin{align}
u^{k+1} = \arg\min_{u \in \tilde{\mathcal{K}}}  \mathcal{L}^{\tau}(f,\Theta^k,u^k,u) \label{min:ulinear}
\end{align}
where 
\begin{align}
&\mathcal{L}^{\tau}(f,\Theta^k,u^k,u)   \label{min:ulinear} \\
 = &  \sum_{i=1}^n \int_{\Omega} u_i \left[ F_i(f,\Theta^k) + 2 \lambda \sqrt{\frac{\pi}{\tau}} \sum\limits_{ j=1,j\neq i}^{n}G_{\tau} *u_j^k \right] \ dx  \nonumber\\
 = &  \sum_{i=1}^n \int_{\Omega} u_i  \phi_i^k \ dx   \nonumber
\end{align}
is a linear functional and 
\begin{align*}
\tilde{\mathcal{K}} =& \{(u_1,u_2, \ldots, u_n)\in BV(\Omega,\mathbb{R}^n) \ | \ u_i \in [0,1],  \\ & \ i \in [n],  \ \textrm{and} \ \sum_{i=1}^n u_i(x) = 1 \}
\end{align*}
is the convex hull of $\tilde{\mathcal{B}}$.
Then, the minimum is attained at
\begin{align}
u_i^{k+1}(x) = \begin{cases} 1 \ \  \textrm{if}  \  \  i = \arg\min_{\ell \in [n]} \phi_\ell^k,   \\
0 \ \  \textrm{otherwise}. \label{min}
 \end{cases}
\end{align}
\begin{rem}
Note that in \eqref{min}, $\arg\min_{\ell \in [n]} \phi_\ell^k$ may have more than one solution. In this case, we simply set $ i = \min \{\arg\min_{\ell \in [n]} \phi_\ell^k \}$.
\end{rem}

Now, we have Algorithm~\ref{a:MBO2} below which is applicable to cases with an arbitrary number of segments and we have Theorem~\ref{thm:stability2} which is same as Theorem~\ref{thm:stability} in Section~\ref{sec:deri2phase} above to guarantee  that the total energy $\mathcal{E}^{\tau}(u,\Theta)$ decreases in the iteration for any $\tau > 0$. Therefore, the ICTM always converges to a stationary partition for any initial partition and an arbitrary number of segments.

\begin{algorithm}[ht]
\DontPrintSemicolon
 \KwIn{Let $\Omega$ be the image domain, $f$ be the image, $\tau > 0$, and $u^0 \in \mathcal{B}$.}
 \KwOut{ A vector-valued function $u^s \in \mathcal{B}$ that approximately minimizes   \eqref{EnergyGeneral}.}
 Set $k=1$\;
 \While{not converged}{
{\bf 1.} For the fixed $u^k$, find
 \[\Theta^k = \arg\min_{\Theta \in \mathcal{S}}\sum_{i=1}^n \int_{\Omega} u_i F_i(f, \Theta) dx.\]
{\bf 2.} For $i\in [n]$, evaluate 
\[\phi_i^k = F_i(f,\Theta^k) + 2\lambda \sum\limits_{ j=1,j\neq i}^{n} \sqrt{\frac{\pi}{\tau}} G_{\tau} *u_j^k.\]
{\bf 3.} For $i\in [n]$, set
\[u_i^{k+1}(x) = \begin{cases}1 \ \  \textrm{if}  \  \  i =\min\{ \arg\min_{\ell \in [n]} \phi_\ell^k\},   \\
0 \ \  \textrm{otherwise}.  \end{cases}\]
Set $k = k+1$\;
 }
\caption{An iterative convolution-thresholding method (ICTM) for approximating minimizers of the energy in \eqref{EnergyGeneral}. }
\label{a:MBO2}
\end{algorithm}

\begin{rem}
The ICTM for the case with two segments is a special case of Algorithm~\ref{a:MBO2}. Also, the ICTM for multiple segments is almost identical to the ICTM for two segments. Similarly, the criterion on the convergence of Algorithm~\ref{a:MBO2} is $ \sum_{i=1}^n \int_\Omega |u_i^k-u_i^{k-1}| \ dx< tol$. In practice, the criterion for the convergence is that no pixel switches from one segment to another between two iterations. 

\end{rem}

\begin{thm}[Stability]\label{thm:stability2}
Let $(u^k,\Theta^k)$ be the $k$-th iteration derived
in Algorithm~\ref{a:MBO2}. We have
\[\mathcal{E}^\tau(u^{k+1},\Theta^{k+1}) \leq \mathcal{E}^\tau(u^{k},\Theta^{k}) \]
for any $\tau$.
\end{thm}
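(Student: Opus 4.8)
The plan is to chain two one-sided inequalities that reflect the coordinate-descent structure of Algorithm~\ref{a:MBO2}: Step~1 (the minimization over $\Theta$) cannot increase the energy, and Steps~2--3 (the convolution--thresholding update of $u$) cannot increase it either. The first inequality is immediate: since $\mathcal{E}_r^{\tau}$ is independent of $\Theta$ and $\Theta^{k+1}$ is, by Step~1, a global minimizer of $\Theta\mapsto\mathcal{E}_f(u^{k+1},\Theta)$ over $\mathcal{S}$, we have $\mathcal{E}_f(u^{k+1},\Theta^{k+1})\le\mathcal{E}_f(u^{k+1},\Theta^{k})$, and adding $\mathcal{E}_r^{\tau}(u^{k+1})$ to both sides gives $\mathcal{E}^{\tau}(u^{k+1},\Theta^{k+1})\le\mathcal{E}^{\tau}(u^{k+1},\Theta^{k})$.

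The substance is the second inequality, $\mathcal{E}^{\tau}(u^{k+1},\Theta^{k})\le\mathcal{E}^{\tau}(u^{k},\Theta^{k})$. First I would fix $\Theta^{k}$ and show that $u\mapsto\mathcal{E}^{\tau}(u,\Theta^{k})$ is concave on $\tilde{\mathcal{K}}$. The term $\mathcal{E}_f$ in \eqref{energy1} is linear in $u$, so only the quadratic regularizer $\mathcal{E}_r^{\tau}(u)=\lambda\sqrt{\pi/\tau}\sum_{i\neq j}\int_{\Omega}u_i\,G_{\tau}*u_j\,dx$ matters. For $u,u'\in\tilde{\mathcal{K}}$ put $v=u-u'$, so that $\sum_{i}v_i\equiv 0$; using the symmetry $\int a\,(G_{\tau}*b)=\int b\,(G_{\tau}*a)$ one computes that the associated quadratic form, evaluated on $v$, equals
\[
\sum_{i\neq j}\int_{\Omega}v_i\,G_{\tau}*v_j\,dx=\int_{\Omega}\Bigl(\sum_i v_i\Bigr)G_{\tau}*\Bigl(\sum_j v_j\Bigr)dx-\sum_{i}\int_{\Omega}v_i\,G_{\tau}*v_i\,dx=-\sum_{i}\int_{\Omega}v_i\,G_{\tau}*v_i\,dx\le 0,
\]
because $G_{\tau}$ is a positive-definite convolution kernel (its Fourier transform is positive). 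Hence $\mathcal{E}_r^{\tau}$, and therefore $\mathcal{E}^{\tau}(\cdot,\Theta^{k})$, is concave on the affine slice $\{\sum_i u_i=1\}$, and in particular on $\tilde{\mathcal{K}}$. The two-phase case is recovered by taking $v_1=v$, $v_2=-v$.

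Next I would use that a concave functional lies below its tangent plane at $u^{k}$. Computing the Gâteaux derivative of $\sum_{i\neq j}\int u_iG_{\tau}*u_j$ at $u^{k}$ produces the factor $2$ that appears in $\phi_i^{k}=F_i(f,\Theta^{k})+2\lambda\sqrt{\pi/\tau}\sum_{j\neq i}G_{\tau}*u_j^{k}$, so the tangent expansion coincides, up to the additive constant $\mathcal{E}^{\tau}(u^{k},\Theta^{k})-\mathcal{L}^{\tau}(f,\Theta^{k},u^{k},u^{k})$, with the linear functional $u\mapsto\mathcal{L}^{\tau}(f,\Theta^{k},u^{k},u)=\sum_i\int_{\Omega}u_i\phi_i^{k}\,dx$ of \eqref{min:ulinear}. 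Concavity then gives, for every $u\in\tilde{\mathcal{K}}$,
\[
\mathcal{E}^{\tau}(u,\Theta^{k})\le\mathcal{E}^{\tau}(u^{k},\Theta^{k})+\mathcal{L}^{\tau}(f,\Theta^{k},u^{k},u)-\mathcal{L}^{\tau}(f,\Theta^{k},u^{k},u^{k}).
\]
It remains to check that the thresholding rule \eqref{min} yields exactly $u^{k+1}=\arg\min_{u\in\tilde{\mathcal{K}}}\mathcal{L}^{\tau}(f,\Theta^{k},u^{k},u)$: since $\mathcal{L}^{\tau}=\int_{\Omega}\sum_i u_i(x)\phi_i^{k}(x)\,dx$ with $u(x)$ ranging over the probability simplex pointwise, the pointwise minimum is attained at the vertex that selects the smallest $\phi_i^{k}(x)$ (with the stated tie-breaking), which is precisely \eqref{min}, and $u^{k+1}\in\tilde{\mathcal{B}}\subset\tilde{\mathcal{K}}$. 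Because $u^{k}\in\tilde{\mathcal{K}}$ as well, $\mathcal{L}^{\tau}(f,\Theta^{k},u^{k},u^{k+1})\le\mathcal{L}^{\tau}(f,\Theta^{k},u^{k},u^{k})$, and the displayed inequality at $u=u^{k+1}$ gives $\mathcal{E}^{\tau}(u^{k+1},\Theta^{k})\le\mathcal{E}^{\tau}(u^{k},\Theta^{k})$. Combining with the first inequality proves the claim.

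The step I expect to be the main obstacle is the concavity argument — in particular, recognizing that the quadratic regularizer, which is \emph{not} negative semi-definite on all of $BV(\Omega,\mathbb{R}^n)$, becomes so once restricted to the affine constraint $\sum_i u_i=1$, which relies on the positive-definiteness of $G_{\tau}$ together with the symmetry of heat-kernel convolution; and, secondarily, making the identification of $\mathcal{L}^{\tau}$ with the true tangent plane precise, i.e.\ getting the factor $2$ in the Gâteaux derivative correct.
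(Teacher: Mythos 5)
Your proof is correct and follows essentially the same route as the paper's: the same two-step coordinate-descent decomposition, and the same key identity that the quadratic remainder $\sum_{i\neq j}\int_\Omega v_i\,G_\tau*v_j\,dx$ collapses to $-\sum_i\int_\Omega v_i\,G_\tau*v_i\,dx\le 0$ on the slice $\sum_i v_i=0$ (the paper phrases this as a direct computation with the linearization $\mathcal{L}^\tau$ and establishes positivity via the semigroup factorization $G_\tau=G_{\tau/2}*G_{\tau/2}$ rather than the Fourier transform, and your concavity-below-tangent packaging is just that computation in different clothing).
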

\begin{proof}
See Appendix \ref{append3}.
\end{proof}

\section{Numerical Experiments}\label{sec:num}
In this section, we demonstrate the efficiency of the proposed algorithms by
numerical examples. We implemented the algorithms in MATLAB  installed on a laptop with a 2.7GHz Intel Core i5 processor and 8GB of RAM. We apply our methods to different models and also compare our results with those obtained from the level-set methods in Li et al. \cite{Chunming_Li_2008} and Zhang et al. \cite{zhang2013local}.  Our results show a clear advantage of ICTM  in terms of simplicity and efficiency.
 
\subsection{Applications to the Chan--Vese model (CV) \eqref{CV}} \label{subsec:cv}
The first application of the proposed ICTM is to recover the scheme in Wang et al. \cite{wang2016efficient} for the CV model. Specifically, in \eqref{CV}, the corresponding $F_i(f,\Theta_1, \Theta_2, \ldots, \Theta_n) =  |C_i-f|^2$, $\Theta_i = C_i$, $S_i = \mathbb{R}$, and $S_i^o = \mathbb{R}$  for $i  \in [n]$.

In  Step 1 in Algorithm~\ref{a:MBO}, when $u^k$ is fixed, $$\int_\Omega u|C_1-f|^2+(1-u)|C_2-f|^2 \ dx$$ is strictly convex with respect to $C_1$ and $C_2$. Hence, direct calculation of the stationary points yields
\[C_1^k = \dfrac{\int_\Omega u^k f \ dx}{\int_\Omega u^k \ dx}, \  \ C_2^k = \dfrac{\int_\Omega (1-u^k) f \ dx}{\int_\Omega 1-u^k \ dx},\]
which are  the average intensities of the image $f$ in $\Omega_1$ and $\Omega_2$, respectively.

For the $n$-phase case in  Algorithm~\ref{a:MBO2}, in Step 1, when $u^k$ is fixed, $\sum_{i=1}^n\int_\Omega u_i^k|C_i-f|^2\ dx$ is strictly convex with respect to $C_i$, $i\in[n]$. Hence, the minimizer is given by 
\[C_i^k = \dfrac{\int_\Omega u_i^k f \ dx}{\int_\Omega u_i^k \ dx},\]
which are the average intensities of the image $f$ in $\Omega_i$.
They are all consistent with the definition of $C_i$ in the CV model \eqref{CV}.
Then, in both Algorithm~\ref{a:MBO} and Algorithm~\ref{a:MBO2}, using $C_i^k$ and $u_i^k$, one can calculate $\phi^k$ (or $\phi_i^k$ in Algorithm~\ref{a:MBO2})  with heat kernel convolution using the fast Fourier transform (FFT),  followed by the thresholding step ({\it i.e.}, Step 3) to obtain $u^{k+1}$. This exactly recovers the scheme we derived in Wang et al. \cite{wang2016efficient}. We show examples from \cite{wang2016efficient}, where  more numerical experiments on the CV  can also be found. 

In Figure~\ref{fig:cv}, we show the results of the ICTM applied to the classic flower image. The figures are initial contour, final contour, and final segments from left to right. In the first row, we use Algorithm~\ref{a:MBO} to have the two phase segmention of the image and in the second row, we use Algorithm~\ref{a:MBO2} to obtain four phase segmention of the image.  In this simulation, we set the domain of the image to be $[-\pi,\pi]\times [-\pi,\pi]$ and the convolutions are efficiently evaluated using FFT. The parameters $(\tau, \lambda)$ are $(0.02,0.05)$ and $(0.02, 0.02)$ and the numbers of iterations are $15$ and $14$. The code for the ICTM on the CV model can be downloaded from \url{https://www.math.utah.edu/~dwang/ICTM_CV.zip}. The results show that the ICTM converges to the stationary solutions in very few steps.

\begin{figure*}
\centering
\includegraphics[width=0.2\textwidth,clip,trim= 4cm 2cm 4cm 2cm]{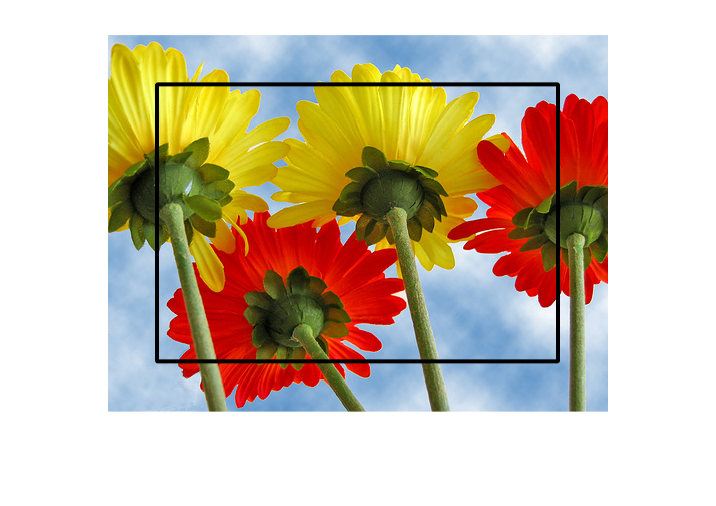} \ \ 
\includegraphics[width=0.2\textwidth,clip,trim= 4cm 2cm 4cm 2cm]{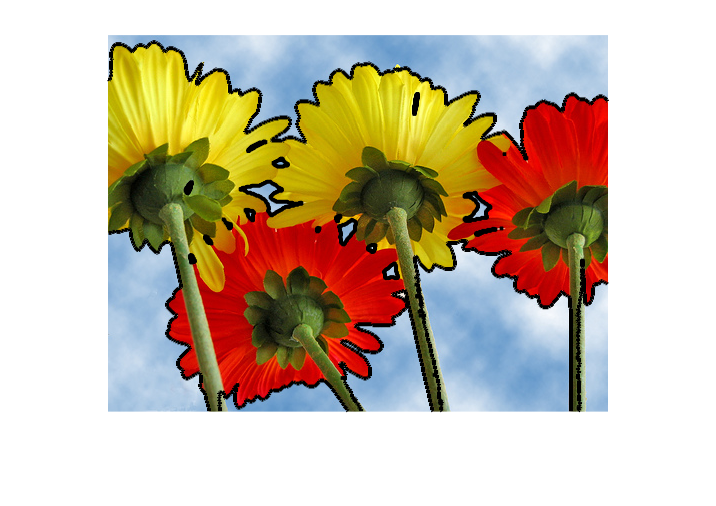}  \ \ 
\includegraphics[width=0.2\textwidth,clip,trim= 4cm 2cm 4cm 2cm]{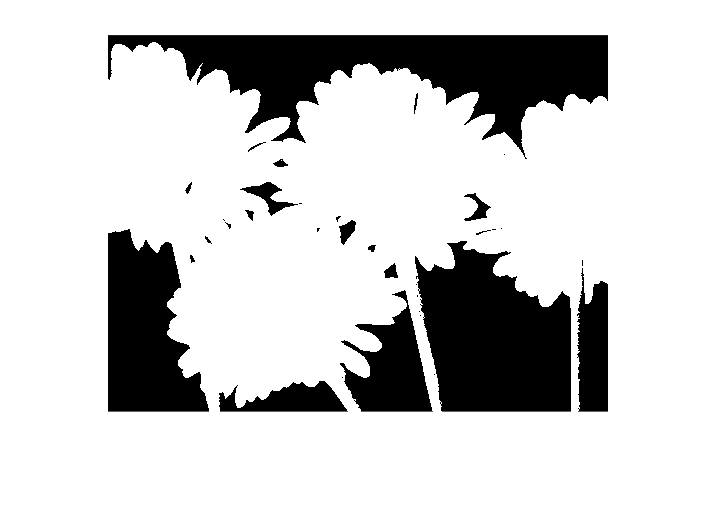} \\ 
\includegraphics[width=0.2\textwidth,clip,trim= 4cm 2cm 4cm 2cm]{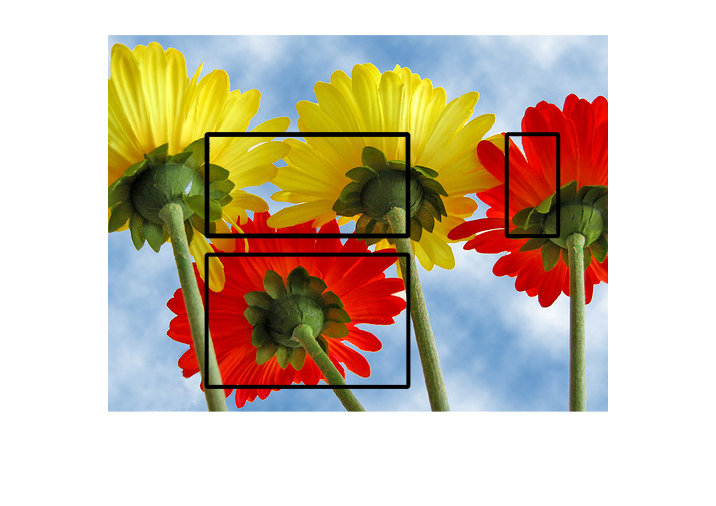}  \ \ 
\includegraphics[width=0.2\textwidth,clip,trim= 4cm 2cm 4cm 2cm]{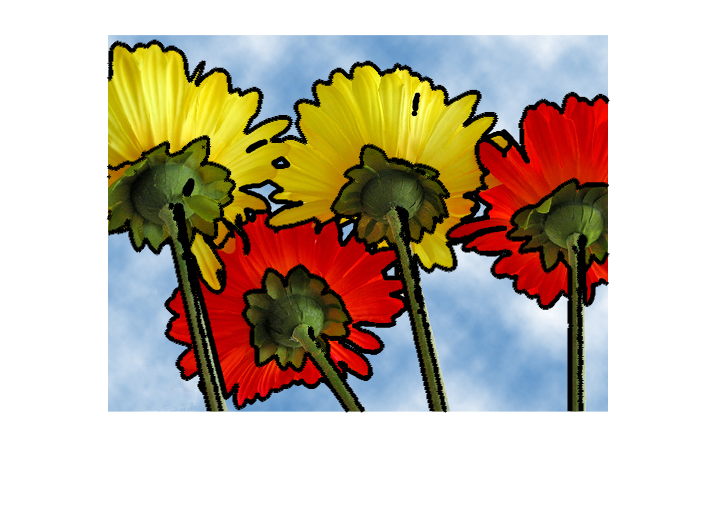}   \ \
\includegraphics[width=0.2\textwidth,clip,trim= 4cm 2cm 4cm 2cm]{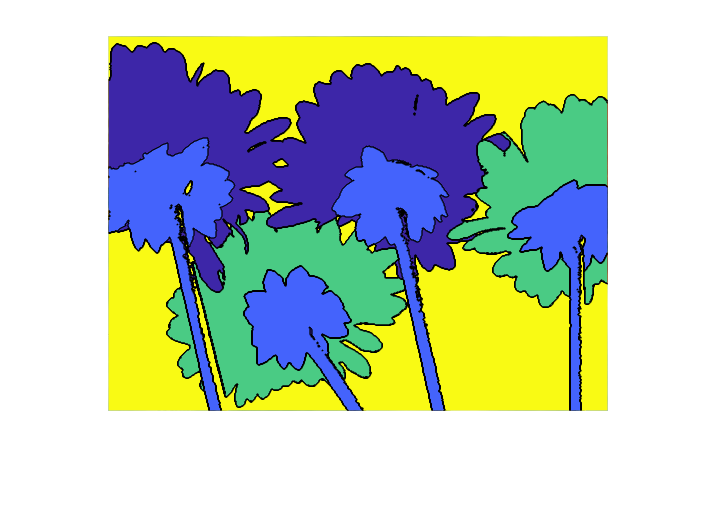} 
\caption{First row: two-phase segmentation with $(\tau, \lambda)=(0.02,0.05)$. The number of iterations is $15$. Second row: four-phase segmentation with $(\tau, \lambda)=(0.02,0.02)$. The number of iterations is $14$. From left to right: initial contour, final contour, and final segments. The code for the ICTM can be downloaded from \url{https://www.math.utah.edu/~dwang/ICTM_CV.zip}. See Section~\ref{subsec:cv} for details.}\label{fig:cv}
\end{figure*}

\subsection{Applications to the locally statistical active contour (LSAC) \eqref{LSAC}}\label{subsec:1}
In this section, we use two-phase  segmentation examples to demonstrate the efficiency of the ICTM. The $n$-phase case can be implemented in a similar way. We now apply the proposed ICTM to the LSAC model \eqref{LSAC}. That is, we choose \begin{align*} &F_i(f,\Theta_1,\Theta_2, \ldots, \Theta_n) \\
=& \int_\Omega I_\rho(x-y) \left(\log(\nu_i) +|f(x)-b(y)C_i|^2/2\nu_i^2\right)  \ dy\end{align*} and $\Theta_i = (\nu_i,b(x), C_i)$ for any $i\in[2]$. Direct calculation shows that the global minimizer of $$ \int_{\Omega} u^k F_1(f,\Theta_1,\Theta_2 )+  (1-u^k) F_2(f,\Theta_1,\Theta_2 )\ dx$$ occurs at its unique stationary point. Since  $F_i$ is independent of $\nu_j$ and $C_j$ for $j \neq i$, Step 1 in Algorithm~\ref{a:MBO} is simplified to
\begin{equation} 
\footnotesize
\begin{cases}
 \iint_{\Omega} u^k(x) I_\rho(x-y)b(y)[f(x) -b(y)C_1] \ dydx = 0, \\
 \iint_{\Omega} (1-u^k(x)) I_\rho(x-y)b(y)[f(x) -b(y)C_2] \ dydx = 0, \\
  \iint_{\Omega} u^k(x) I_\rho(x-y)[\nu_1^2- [f(x) -b(y)C_1]^2] \ dy dx = 0, \\ 
  \iint_{\Omega} (1-u^k(x)) I_\rho(x-y)[\nu_n^2- [f(x) -b(y)C_2]^2] \ dy dx = 0, \\
 \iint_{\Omega} u^k(x) I_\rho(x-y)[f(x)-b(y)C_1]C_1/\nu_1^2 \ dydx \\
 + \iint_{\Omega} (1-u^k(x)) I_\rho(x-y)[f(x)-b(y)C_2]C_2/\nu_2^2 \ dydx = 0.
\end{cases}
\end{equation}
Then, one can use the Gauss--Seidel strategy to obtain  $	(\nu_i,b,C_i)$ for $i\in[2]$:
\begin{equation} \label{solutionTheta1}
\scriptsize
\begin{cases}
C_1^k = \dfrac{\int_\Omega (I_\rho*{b^{k-1}}) f u^k \ dx}{\int_\Omega (I_\rho*{b^{k-1}}^2) u^k \ dx}, \\
C_2^k = \dfrac{\int_\Omega (I_\rho*{b^{k-1}}) f (1-u^k) \ dx}{\int_\Omega (I_\rho*{b^{k-1}}^2) (1-u^k) \ dx}, \\
\nu_1^k =\sqrt{ \dfrac{\int_\Omega \int_\Omega  I_\rho(x-y) u^k(x) (f(x)-b^{k-1}(y)C_1^k)^2 \ dydx   }{\int_\Omega \int_\Omega  I_\rho(x-y) u^k(y)  \ dydx }},\\
\nu_2^k =\sqrt{ \dfrac{\int_\Omega \int_\Omega  I_\rho(x-y) (1-u^k(x)) (f(x)-b^{k-1}(y)C_2^k)^2 \ dydx   }{\int_\Omega \int_\Omega  I_\rho(x-y) (1-u^k(y))  \ dydx }},\\
b^k(x) = \dfrac{[C_1^k/(\nu_1^k)^2] I_\rho*(fu^k)+[C_2^k/(\nu_2^k)^2] I_\rho*(f(1-u^k))}{ [(C_1^k/\nu_1^k)^2]I_\rho*u^k+ [(C_2^k/\nu_2^k)^2]I_\rho*(1-u^k)} .
\end{cases}
\end{equation}

We then evaluate $\phi^k$  according to  Step 2 in Algorithm~\ref{a:MBO},  which is then followed by the thresholding step ({\it i.e.} Step 3) to determine $u^{k+1}$. 

We now show numerical examples and compare our results with those in 	Zhang et al. \cite{zhang2013local} where level-set approach is used. In this numerical computation, we use the image  domain $\Omega=[-\pi,\pi]^2$.  The convolutions are efficiently evaluated by FFT.

\subsubsection{A star-shaped image with intensity inhomogeneity} 
We start from a classical star-shaped image with ground-truth. Figure~\ref{fig:1} shows the segmentation results for five images with different levels of intensity inhomogeneity.  The table in Figure~\ref{fig:1} shows the efficiency and robustness of the proposed ICTM when compared with the level-set method \cite{zhang2013local}. The number of iterations needed for the ICTM to converge remains almost the same at $7$ for different intensity inhomogeneity, while the number of iterations  increases from $7$ to about $240$ for the level-set method in Zhang et al. \cite{zhang2013local}. 
We also use the Jaccard similarity (JS) as an index to measure the accuracy of our segmentation. The JS index between two regions $S_1$ and $S_2$ is calculated as $JS(S_1,S_2) = |S_1 \cap S_2|/|S_1 \cup S_2|$, which describes the ratio between the intersection areas of $S_1$ and $S_2$. In the five experiments in Figure~\ref{fig:1}, we have $JS(S_1, S_2) = 1, 1, 0.9997, 0.9985,$ and $0.9985$, respectively, when we set $S_1$ as the numerical result and $S_2$ as the ground truth. The parameters in the five experiments are all fixed as $(\rho, \gamma, \tau) = (15, 0.1, 0.001)$. \begin{figure*}[ht]
\centering
\includegraphics[width=0.19\textwidth,clip,trim= 6cm 6cm 6cm 6cm]{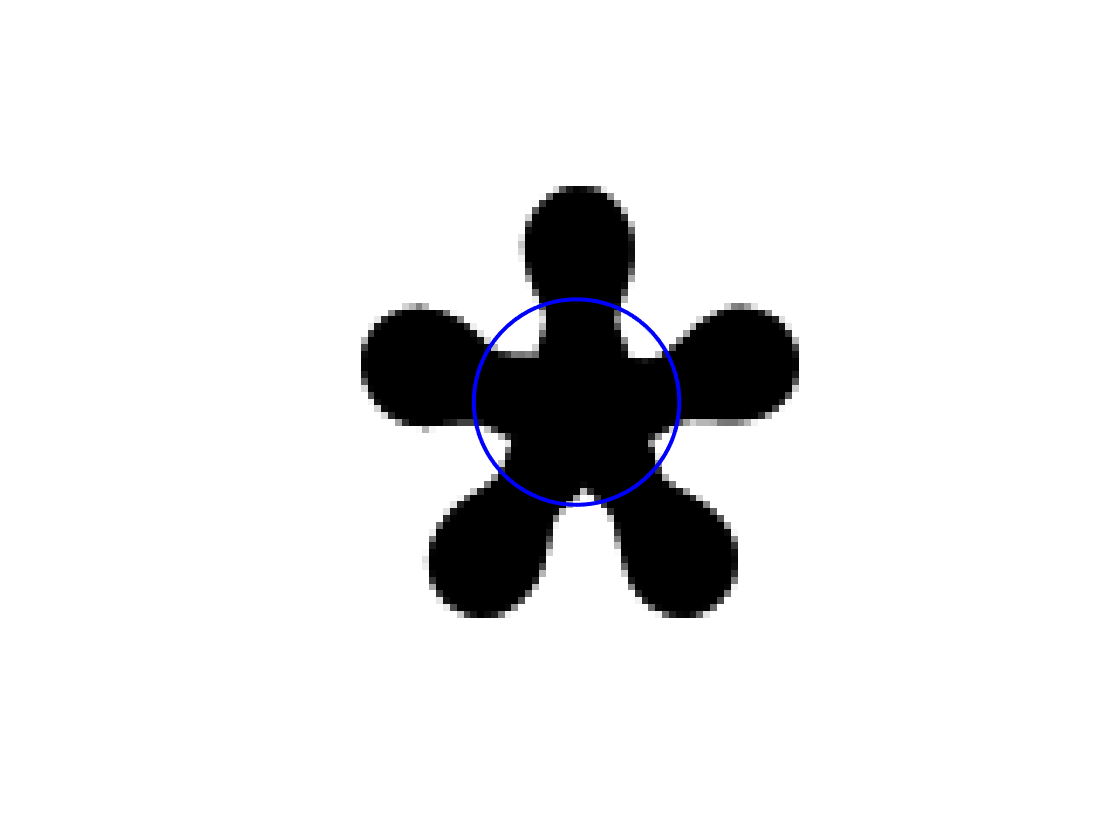}
\includegraphics[width=0.19\textwidth,clip,trim= 6cm 6cm 6cm 6cm]{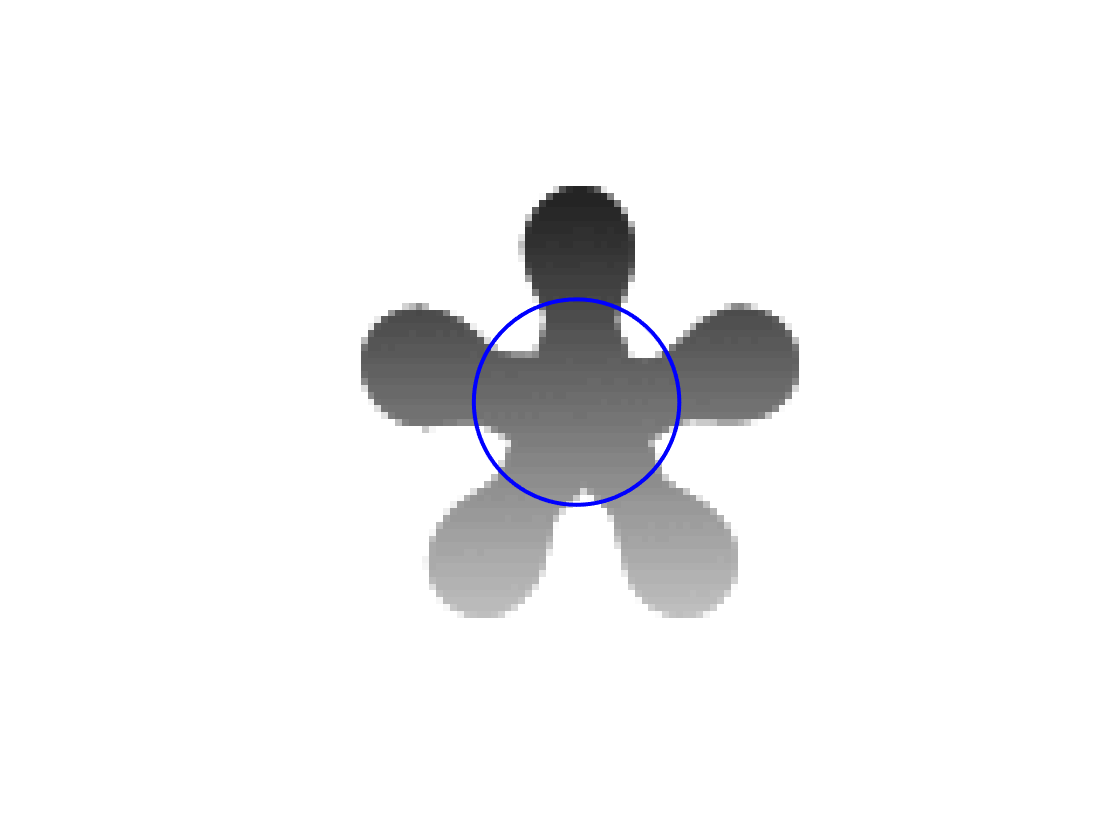}
\includegraphics[width=0.19\textwidth,clip,trim= 6cm 6cm 6cm 6cm]{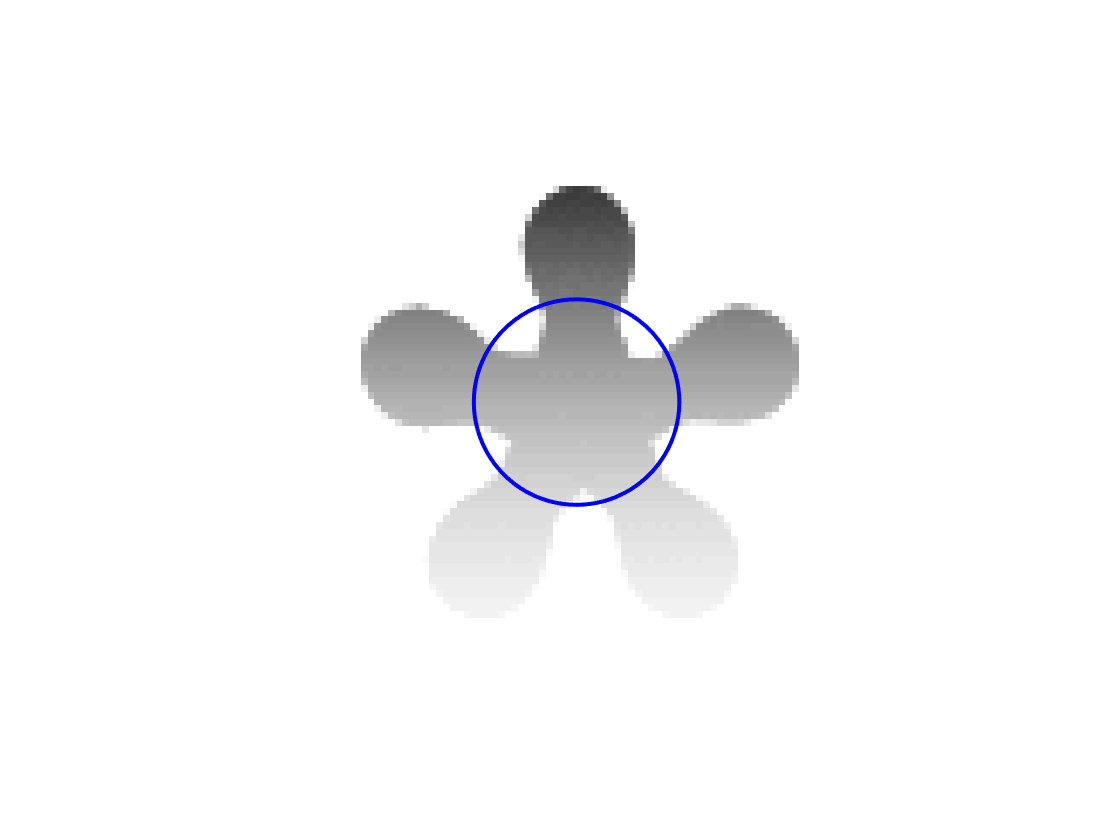}
\includegraphics[width=0.19\textwidth,clip,trim= 6cm 6cm 6cm 6cm]{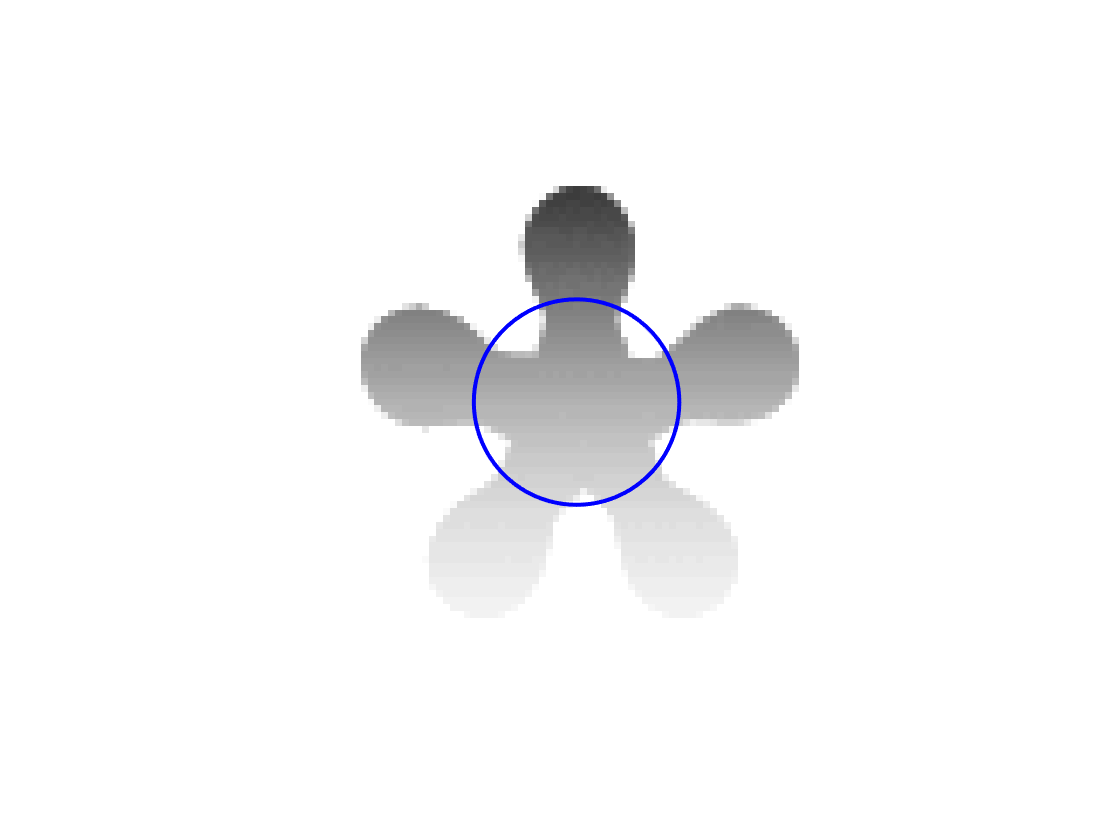}
\includegraphics[width=0.19\textwidth,clip,trim= 6cm 6cm 6cm 6cm]{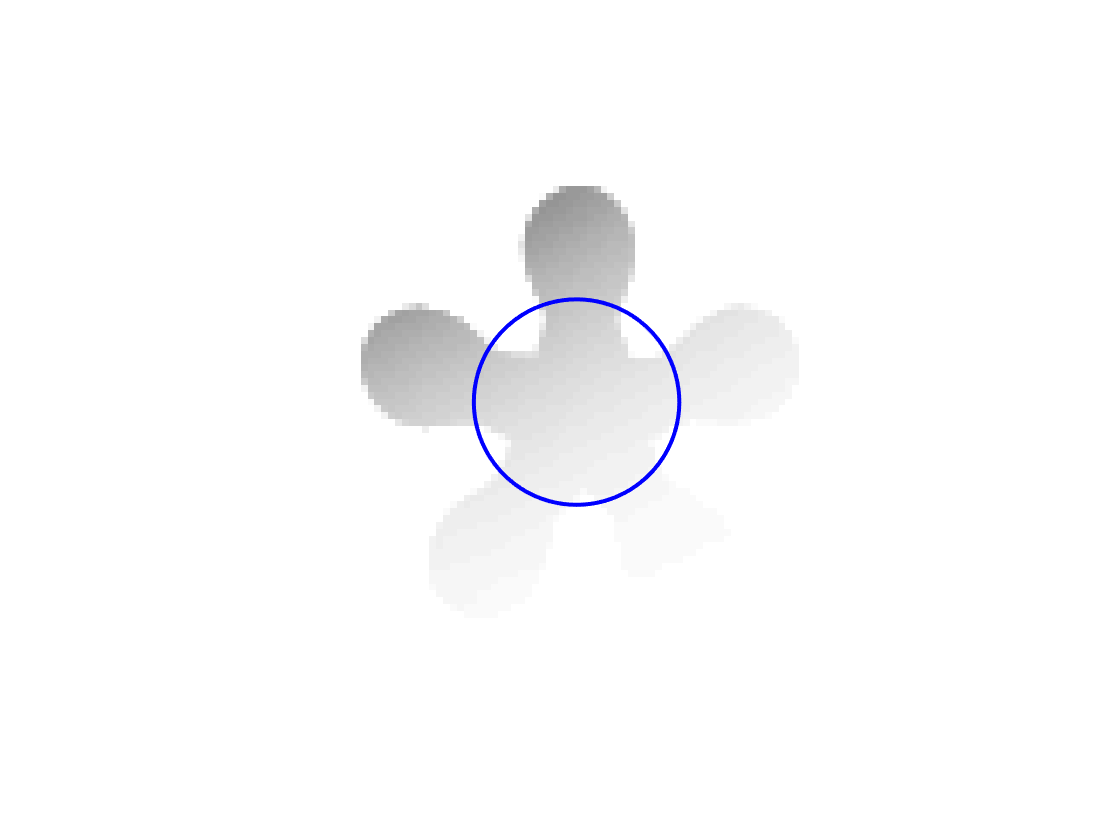}
\includegraphics[width=0.19\textwidth,clip,trim= 6cm 6cm 6cm 6cm]{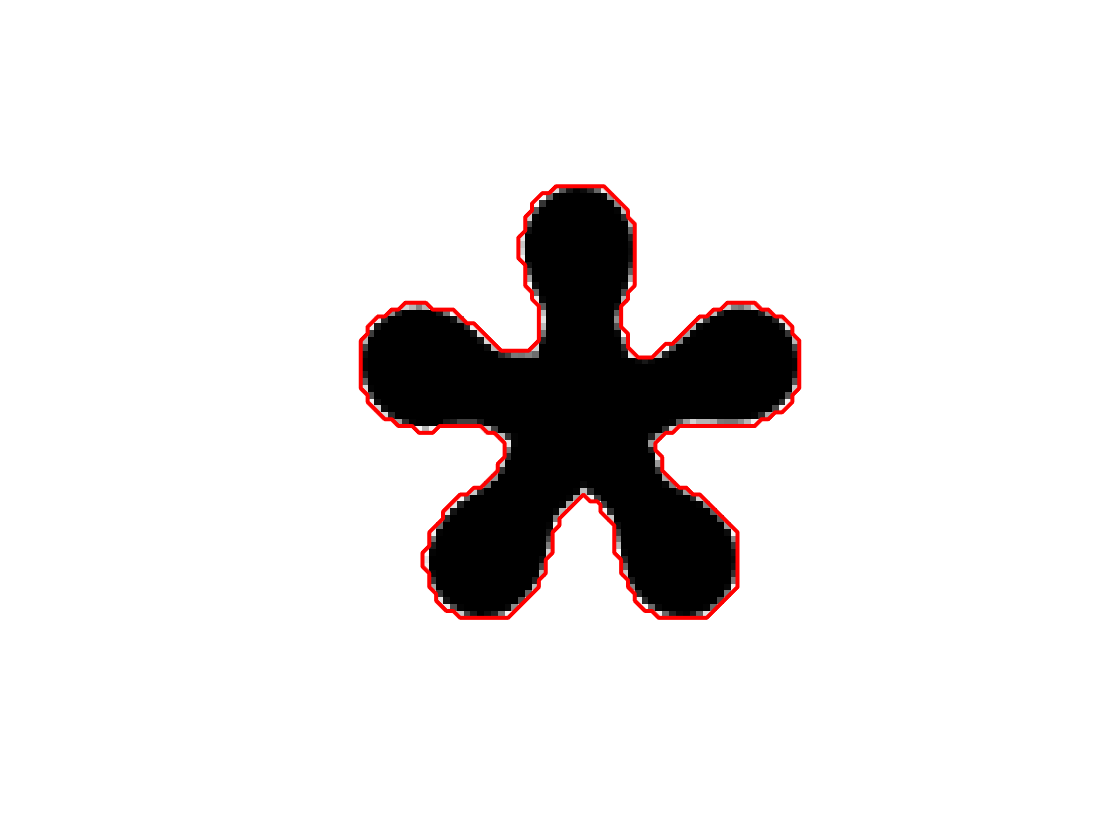}
\includegraphics[width=0.19\textwidth,clip,trim= 6cm 6cm 6cm 6cm]{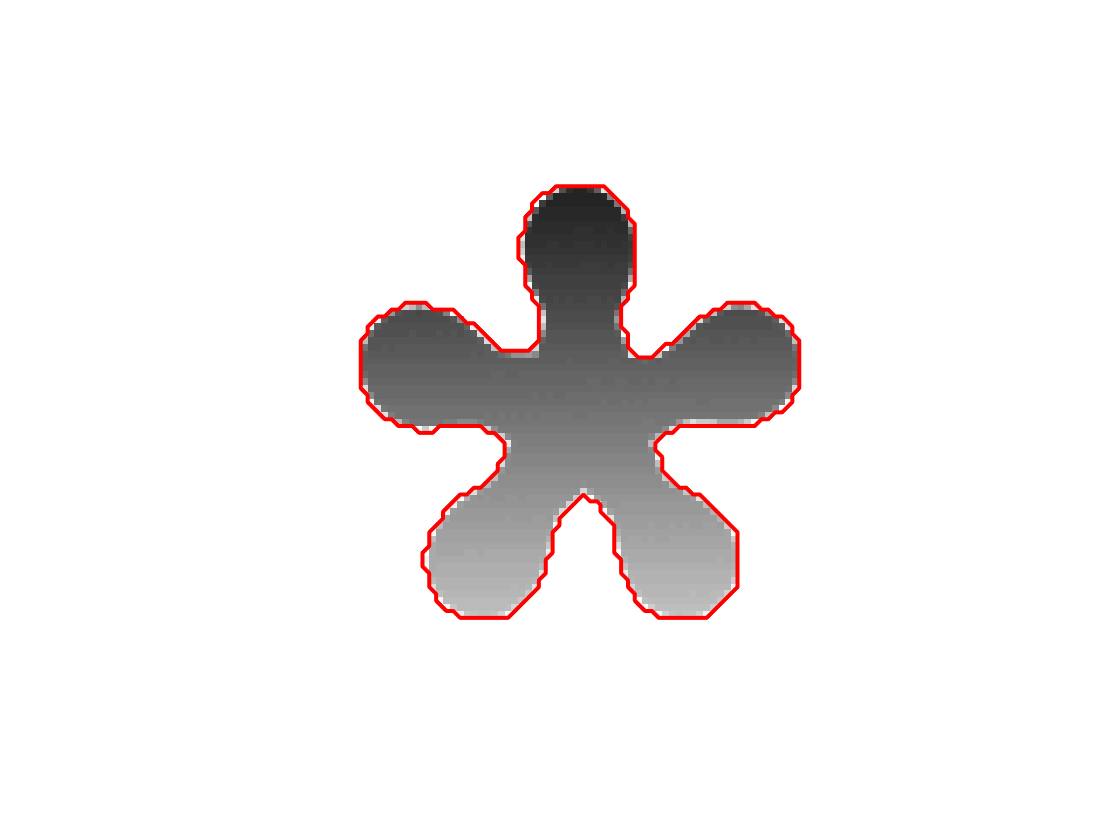}
\includegraphics[width=0.19\textwidth,clip,trim= 6cm 6cm 6cm 6cm]{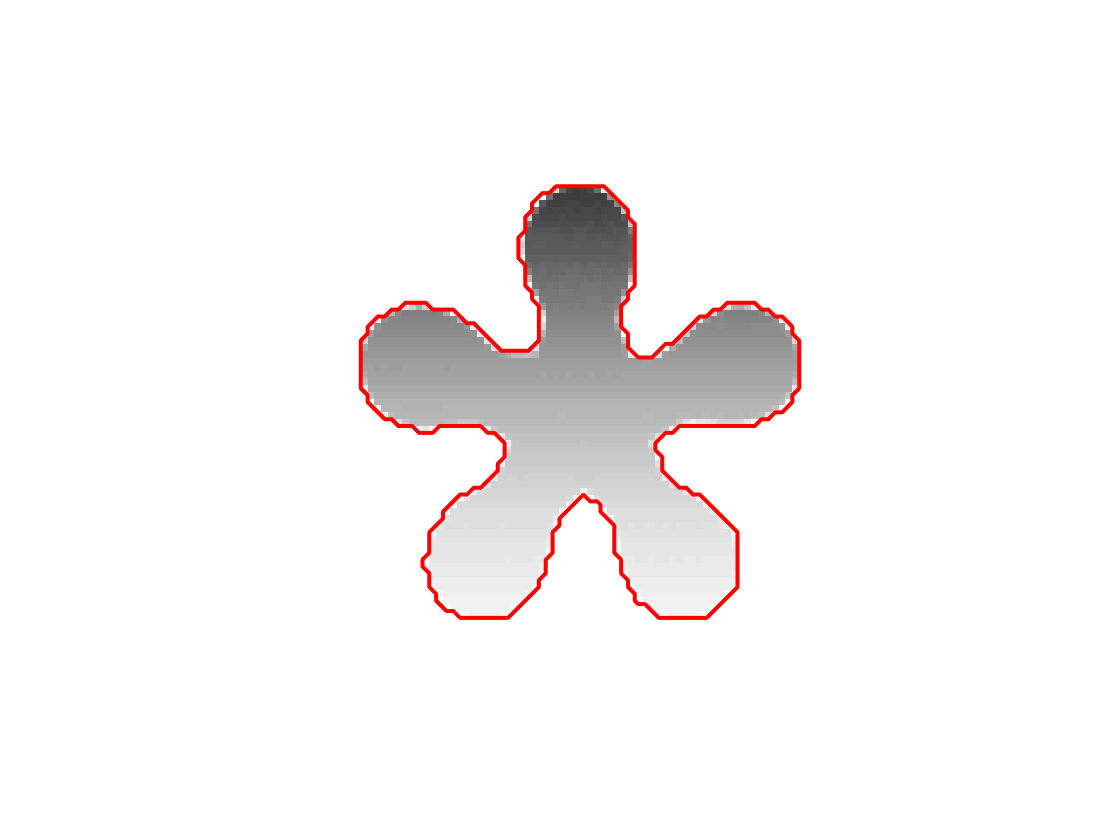}
\includegraphics[width=0.19\textwidth,clip,trim= 6cm 6cm 6cm 6cm]{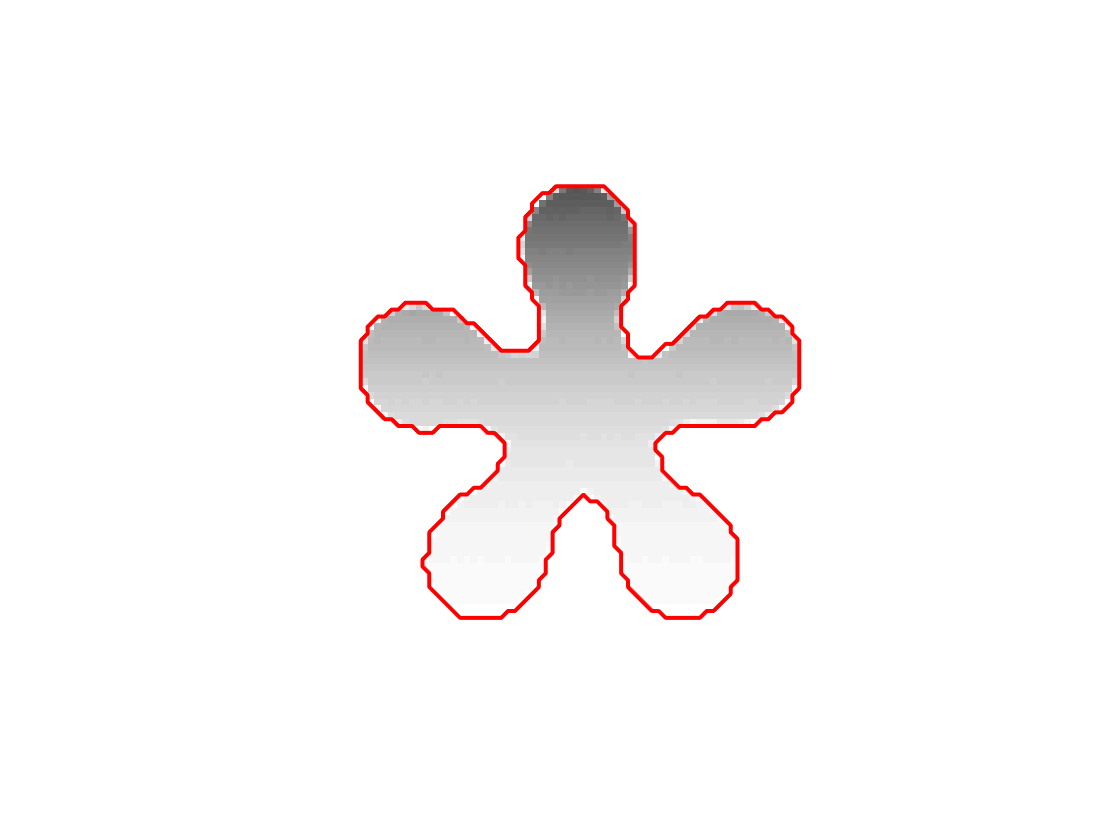}
\includegraphics[width=0.19\textwidth,clip,trim= 6cm 6cm 6cm 6cm]{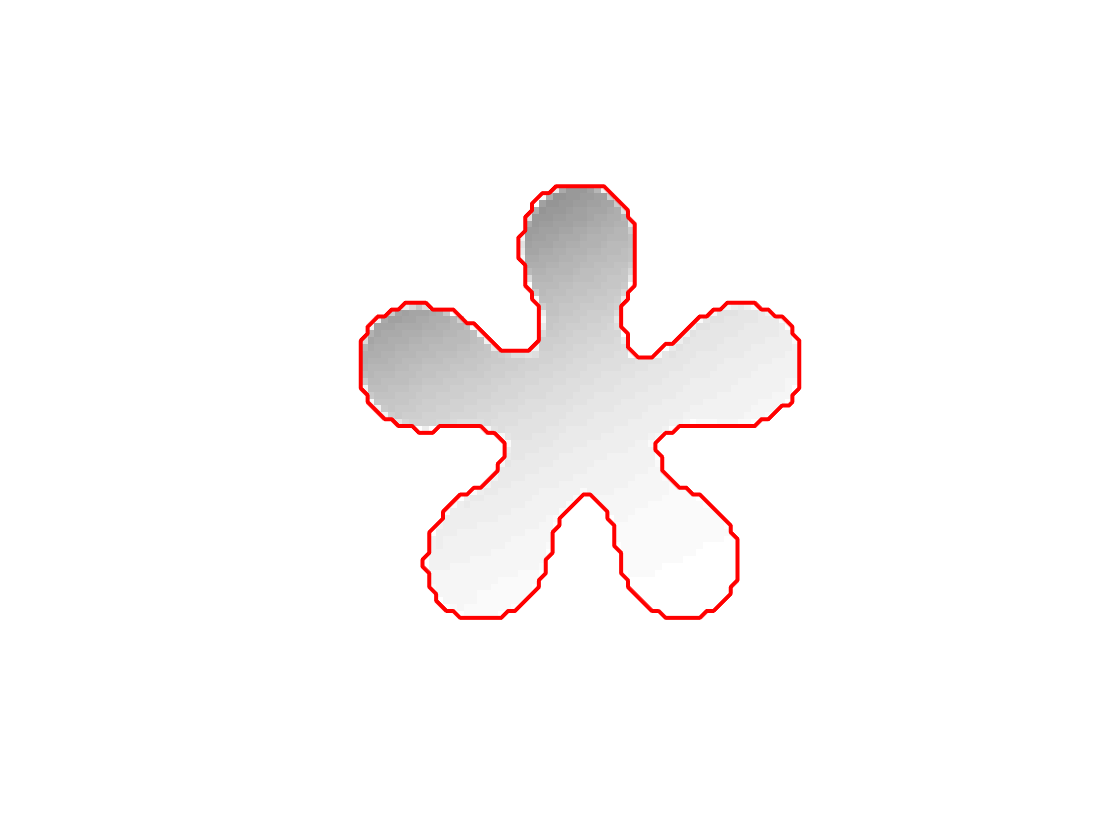}

\medskip

\begin{tabular}{|c|c|c|c|c|c|}
\hline
\# of iterations of the ICTM & 8  & 7 & 7& 7 & 7  \\ 
\hline
\# of iterations of the level-set method \cite{zhang2013local}&7 & 13 & 35& 186 & 239\\
\hline
\end{tabular}
\caption{First row: Initial contour of the same image with different intensity inhomogeneity. Second row: The segmented region. Table: Comparison of  the number of iterations for each case from left to right between the ICTM and the level-set method used in Zhang et al. \cite{zhang2013local}. In all five experiments, we set $\rho = 15$, $\gamma = 0.1$, and $\tau = 0.001$. The results for the level-set method are obtained using the software code from \url{https://www4.comp.polyu.edu.hk/~cslzhang/LSACM/LSACM.htm}. See Section~\ref{subsec:1} for details.} \label{fig:1}
\end{figure*}

\subsubsection{Noisy intensity inhomogeneity images}
We then apply the ICTM to five different, noisy  intensity-inhomogeneous  images.  The results in Figure~\ref{fig:2} again show that our  ICTM  is efficient and accurate. The parameters for the five figures from left to right are $(\rho, \gamma, \tau) = (15, 0.1, 0.02)$, $(5, 0.15, 0.03)$, $(10, 0.02, 0.01)$, $(10, 0.7, 0.03)$, and $(10, 0.035, 0.002)$. Numbers of iterations in the ICTM are 5, 30, 28, 35, and 18. However, the numbers of iterations in the level-set method are $57$, $219$, $670$, $290$, and $230$. The table in Figure~\ref{fig:2} shows that the ICTM  is an order of magnitude faster than the level-set method.  
 
\begin{figure*}[ht]
\centering
\includegraphics[width=0.19\textwidth,clip,trim= 6cm 2cm 6cm 2cm]{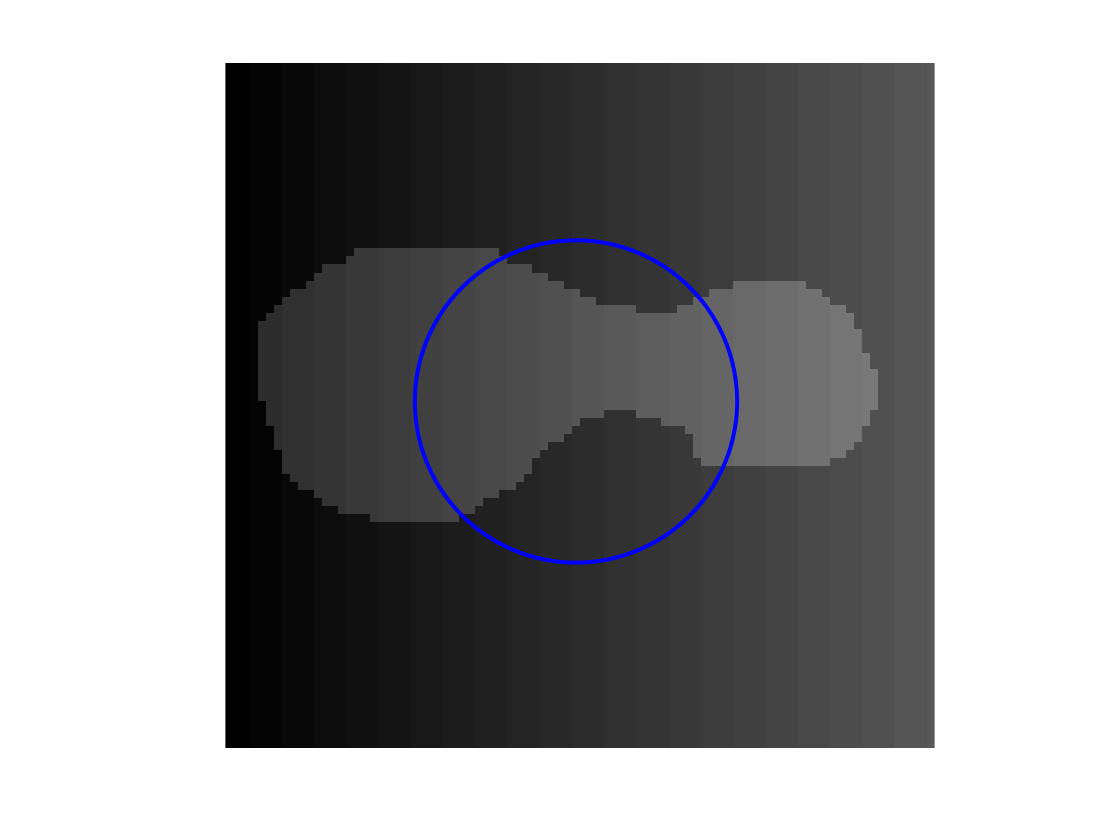}
\includegraphics[width=0.19\textwidth,clip,trim= 6cm 2cm 6cm 2cm]{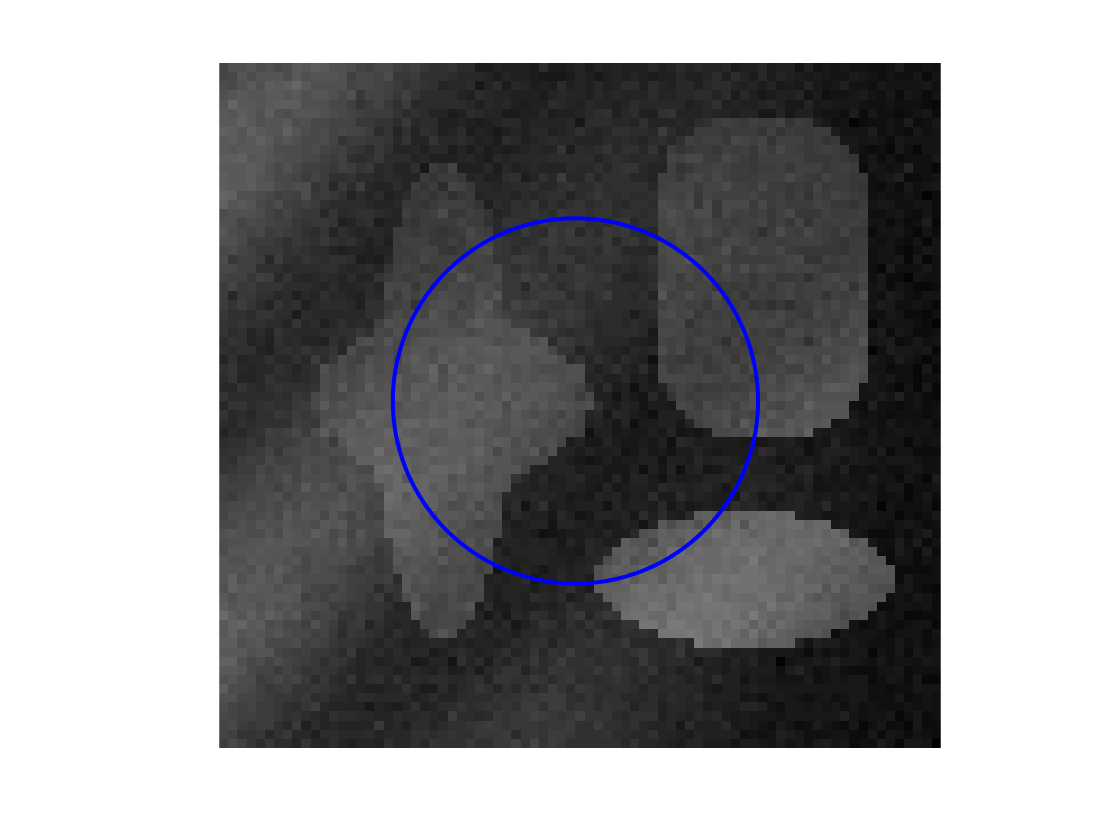}
\includegraphics[width=0.19\textwidth,clip,trim= 6cm 2cm 6cm 2cm]{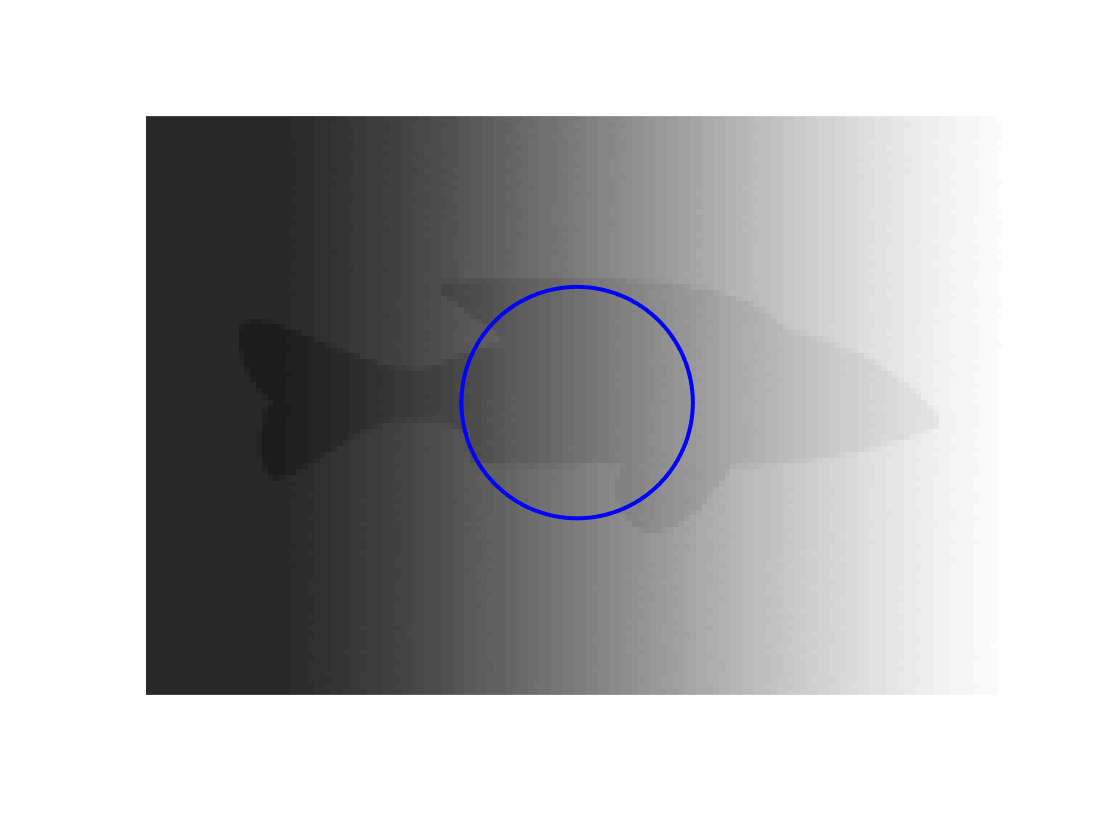}
\includegraphics[width=0.19\textwidth,clip,trim= 6cm 2cm 6cm 2cm]{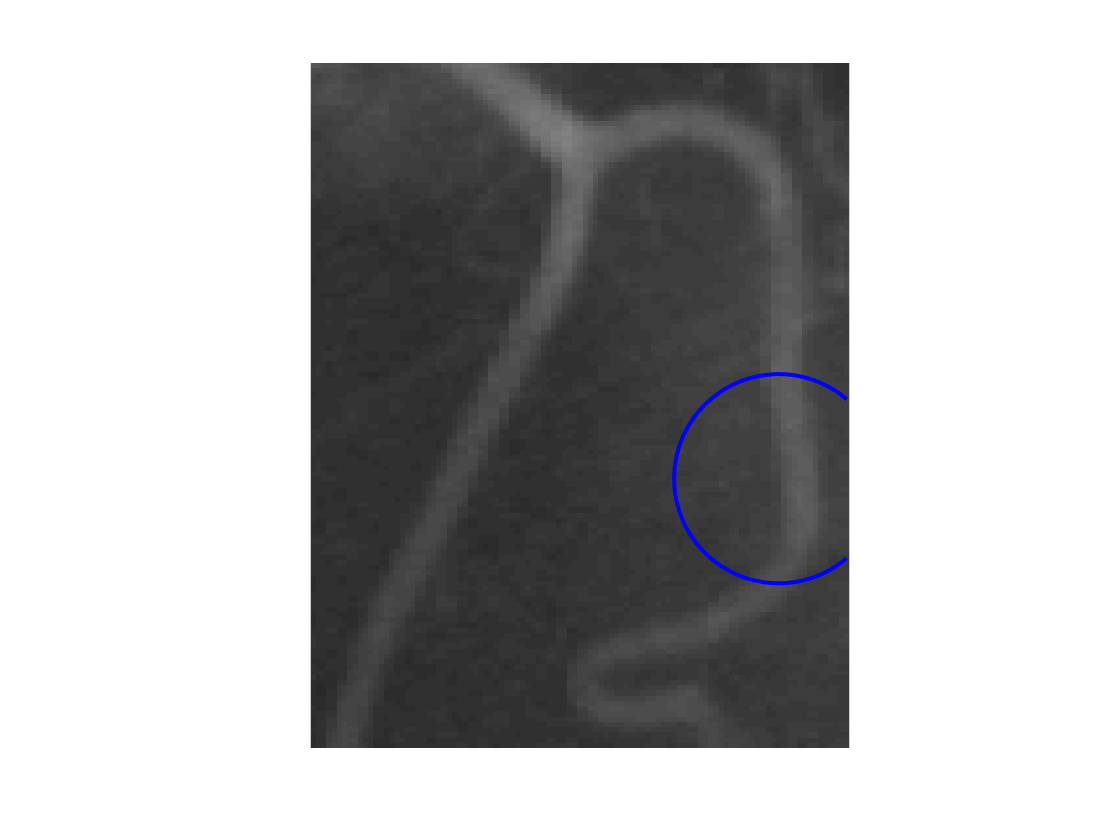}
\includegraphics[width=0.19\textwidth,clip,trim= 6cm 2cm 6cm 2cm]{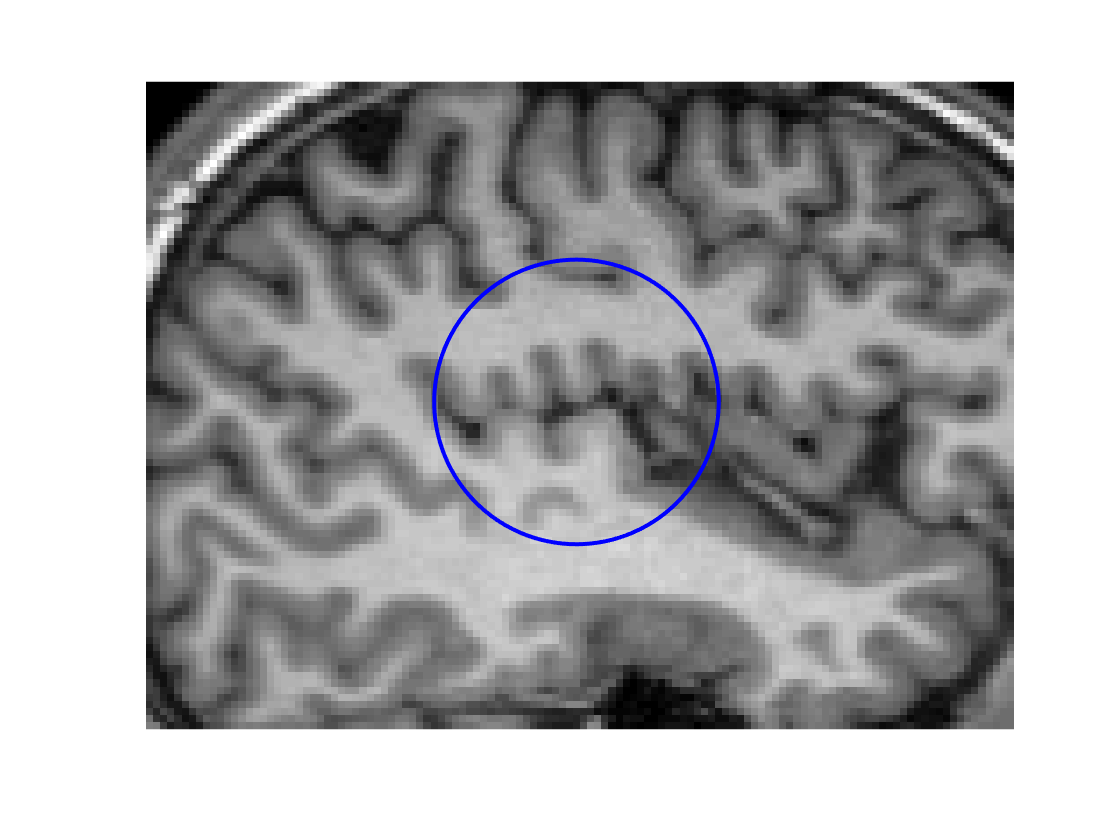}\\
\includegraphics[width=0.19\textwidth,clip,trim= 6cm 2cm 6cm 2cm]{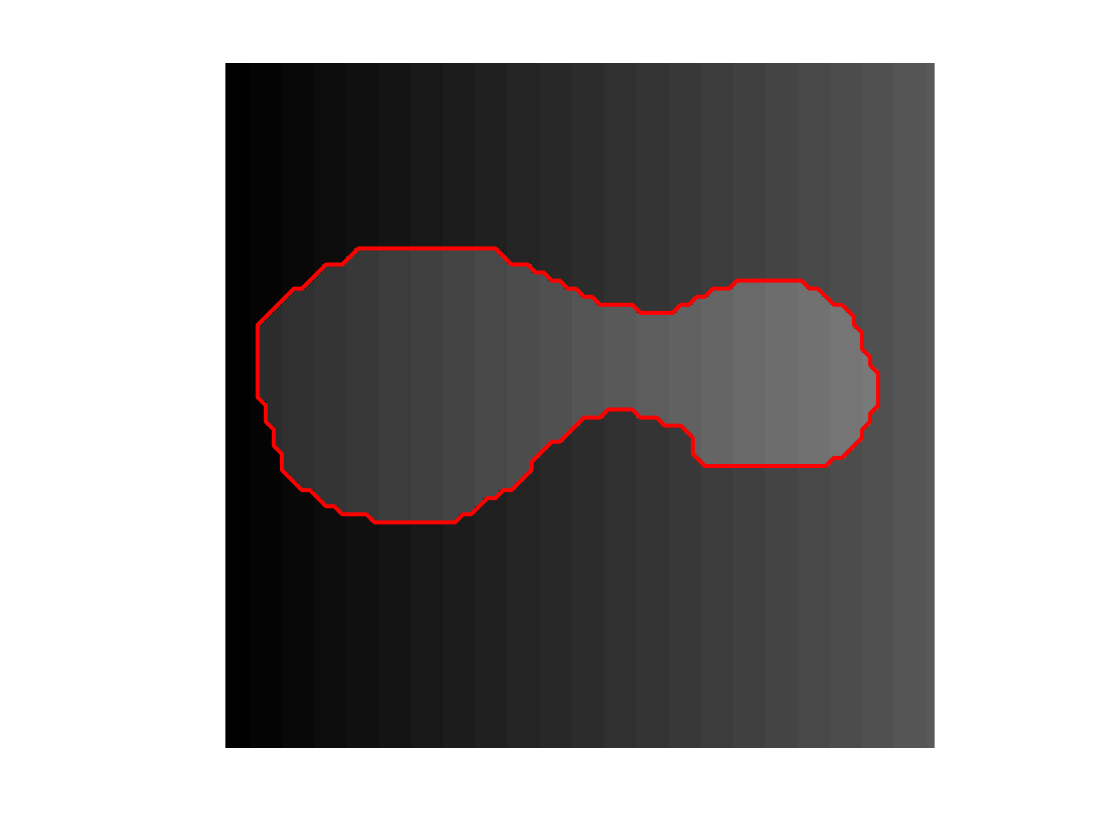}
\includegraphics[width=0.19\textwidth,clip,trim= 6cm 2cm 6cm 2cm]{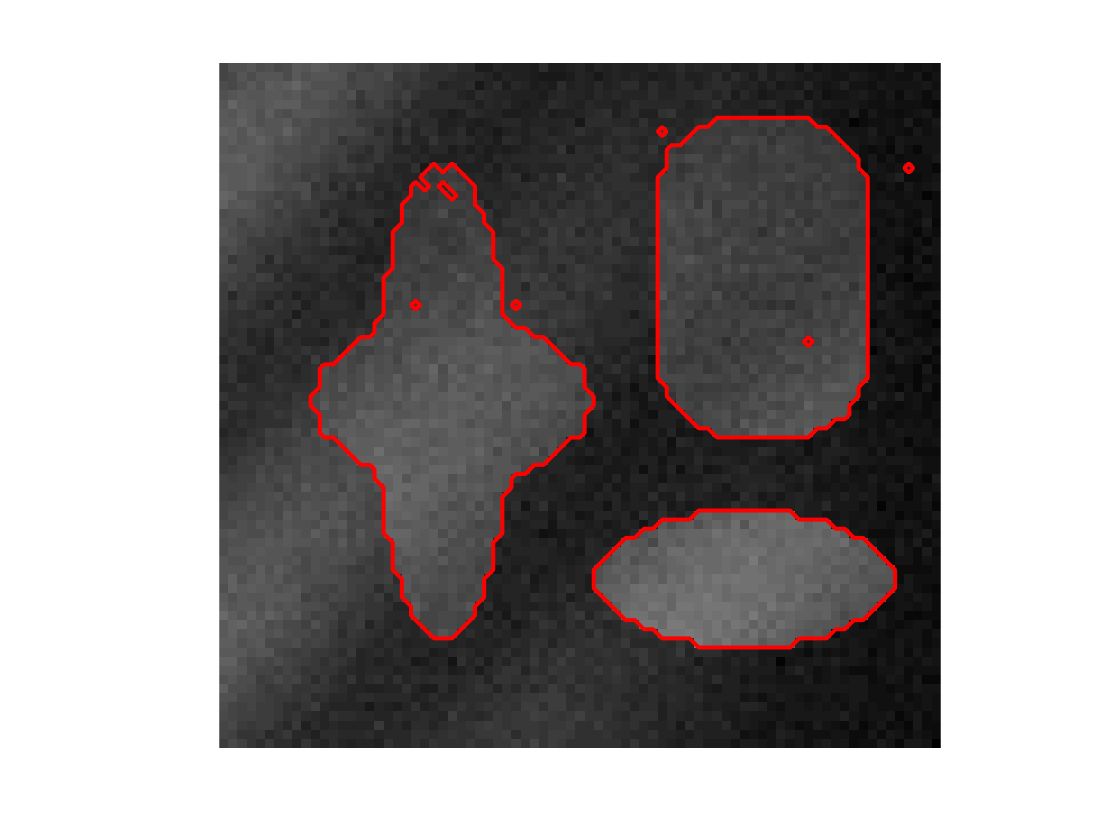}
\includegraphics[width=0.19\textwidth,clip,trim= 6cm 2cm 6cm 2cm]{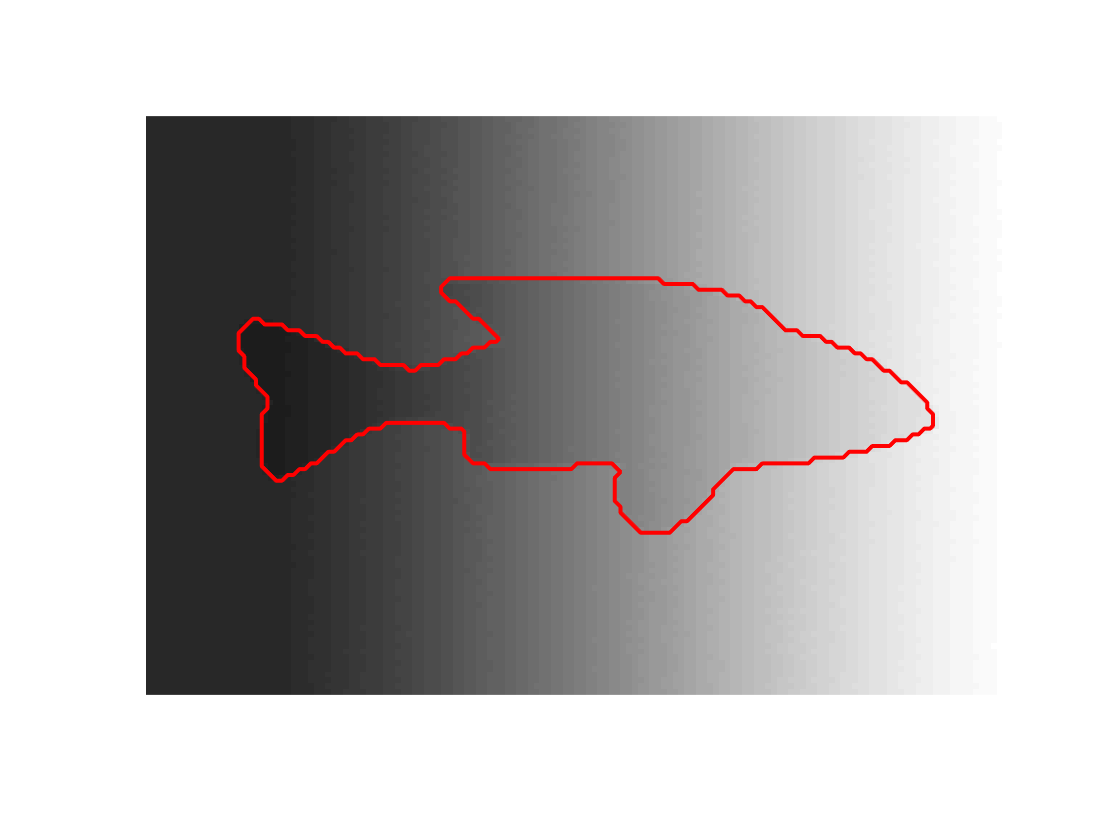}
\includegraphics[width=0.19\textwidth,clip,trim= 6cm 2cm 6cm 2cm]{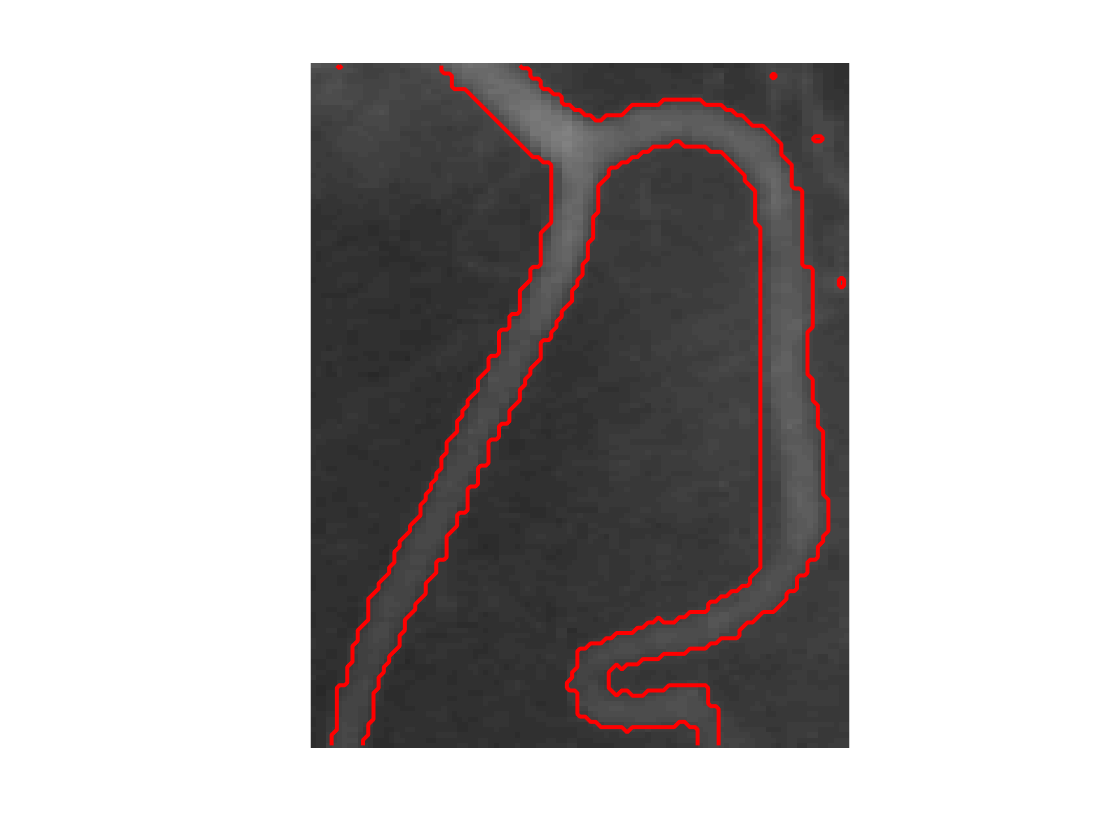}
\includegraphics[width=0.19\textwidth,clip,trim= 6cm 2cm 6cm 2cm]{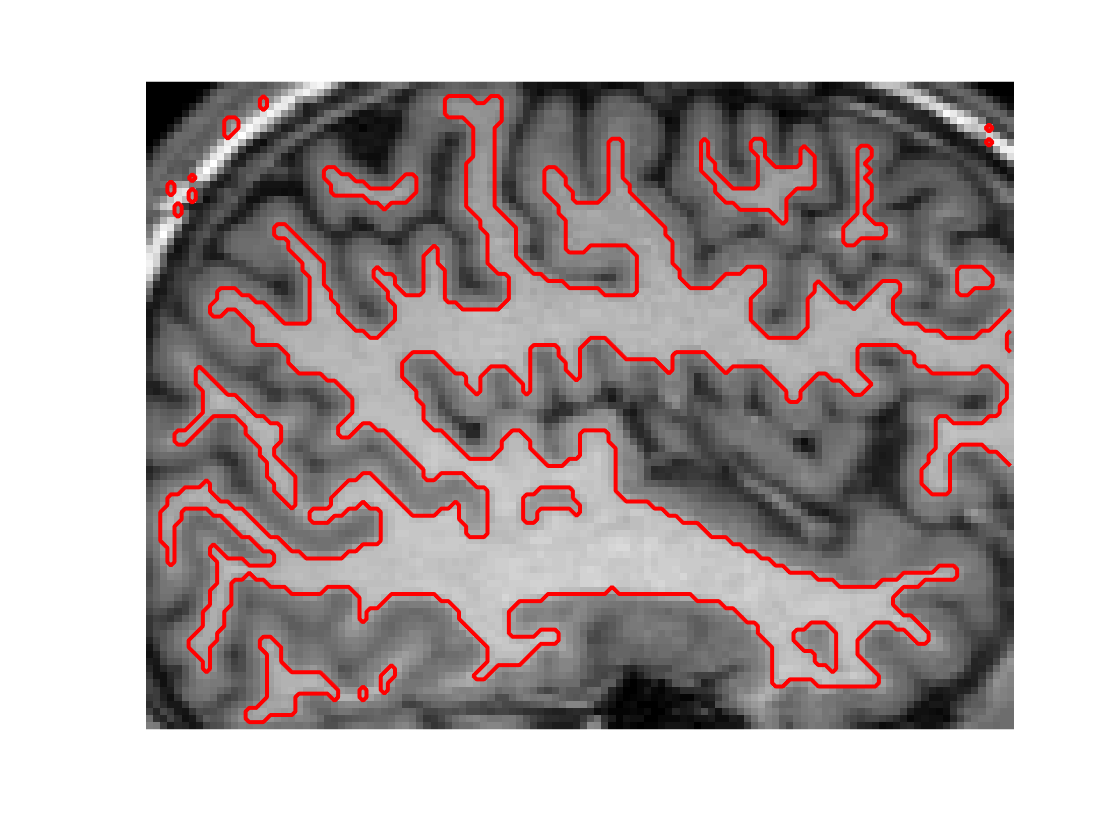}

\medskip

\begin{tabular}{|c|c|c|c|c|c|}
\hline
\# of iterations of the ICTM & 5  & 30 & 28 & 35 & 18  \\ 
\hline
\# of iterations of the level-set method \cite{zhang2013local}&  57 & 219 & 670 & 290 & 230\\
\hline
\end{tabular}

\caption{Initial contour and segmented region using the ICTM in the LSAC model. The parameters from left to right are $(\rho, \gamma, \tau) = (15, 0.1, 0.02)$, $(5, 0.15, 0.03)$, $(10, 0.02, 0.01)$, $(10, 0.7, 0.03)$, and $(10, 0.035, 0.002)$. The results for the level-set method are obtained using the software code from \url{https://www4.comp.polyu.edu.hk/~cslzhang/LSACM/LSACM.htm}.  See Section~\ref{subsec:1} for details.} \label{fig:2}
\end{figure*}

\subsection{Applications to the Local Intensity Fitting (LIF) model  \eqref{LBF}}\label{subsec:2}
Finally, we apply the ICTM to the LIF model \eqref{LBF} for the two-phase case. In this case, we choose $F_i(f,\Theta_1,\Theta_2, \ldots, \Theta_n) = \mu_i \int_{\Omega}G_{\sigma}(x-y)|C_i(x)-f(y)|^2 \ dx$ and $\Theta_i =  C_i(x)$ for any $i\in[2]$. When $(u_1^k, \ldots, u_n^k)$ are fixed,
$$\mathcal{E}_f = \int_{\Omega} u^k F_1(f,C_1,C_2)+(1-u^k) F_2(f,C_1, C_2) \ dy$$ is strictly convex with respect to $C_i(x)$, $i\in[2]$. 
Then, direct calculations reduce Step 1 in Algorithm~\ref{a:MBO} to 
\begin{align*} 
&\iint_{\Omega} u^k(y) G_\sigma (x-y) [C_1(x)-f(y)] \ dydx= 0,  \\    
&\iint_{\Omega} (1-u^k(y)) G_\sigma (x-y) [C_2(x)-f(y)] \ dydx= 0 \\
 \end{align*}
whose solutions are given by
\begin{equation} \label{lbfC}
C_1^k(x)  = \frac{G_{\sigma}*( u^k f)}{G_{\sigma}* u^k}, \  \ C_2^k(x)  = \frac{G_{\sigma}*( (1-u^k) f)}{G_{\sigma}*(1-u^k)}.
\end{equation}

\begin{rem}
In \eqref{lbfC}, $C_i^k(x)$ may not be defined at some $x\in \Omega$ since $G_{\sigma}* u^k$ or $G_{\sigma}* (1-u^k)$can be zero (at least numerically). Since $G_{\sigma}* u^k \geq 0$ and $G_{\sigma}*(1-u^k) \geq 0$, we add a small number  $\varepsilon>0$ in both the numerator and the denominator as follows, 
\begin{align*}
&C_1^k(x)  = \frac{G_{\sigma}*( u^k f)+\varepsilon}{G_{\sigma}* u^k+\varepsilon}, \\ & C_2^k(x)  = \frac{G_{\sigma}*((1-u^k) f)+\varepsilon}{G_{\sigma}* (1-u^k)+\varepsilon}.
\end{align*}
In the subsequent examples, we set $\varepsilon = 10^{-6}$.
\end{rem}
Again, the evaluation of $\phi^k$ in Step 2 of  Algorithm~\ref{a:MBO} from \eqref{solutionTheta1} is followed by the thresholding step (i.e. Step 3) to determine $u^{k+1}$. 

We now show numerical examples and compare our results with those in Li et al. \cite{Chunming_Li_2008} using  the level set method.
To be consistent with  the code of \cite{Chunming_Li_2008} from \url{http://www.imagecomputing.org/~cmli/code/}, we use the two-dimensional Gaussian low-pass filter instead of the Gaussian kernel $G_\sigma$ to avoid specifying the domain size of $\Omega$. The filter can be generated by the MATLAB's \texttt{fspecial} function. 
Figure~\ref{fig:3} displays several numerical experiments on different intensity-inhomogeneous images. In all five experiments, we set $\mu_1 = \mu_2 = 1$.  In Figure~\ref{fig:3}, from left to right, we set $(\sigma, \tau, \lambda) = (20,15, 500)$, $(3, 5, 150)$, $(3, 3, 245)$, $(3,10,110)$, and $(3,2,90)$. In the table in  Figure~\ref{fig:3}, we compare the ICTM and the level-set method in Li et al. \cite{Chunming_Li_2008} in terms of the number of iterations for convergence.  In the first example from the left, the method in Li et al. \cite{Chunming_Li_2008} does not even converge.  In all other examples,  ICTM converges in significantly fewer iterations, demonstrating its very high efficiency.

\begin{figure*}[ht]
\centering
\includegraphics[width=0.19\textwidth,clip,trim= 6cm 2cm 6cm 2cm]{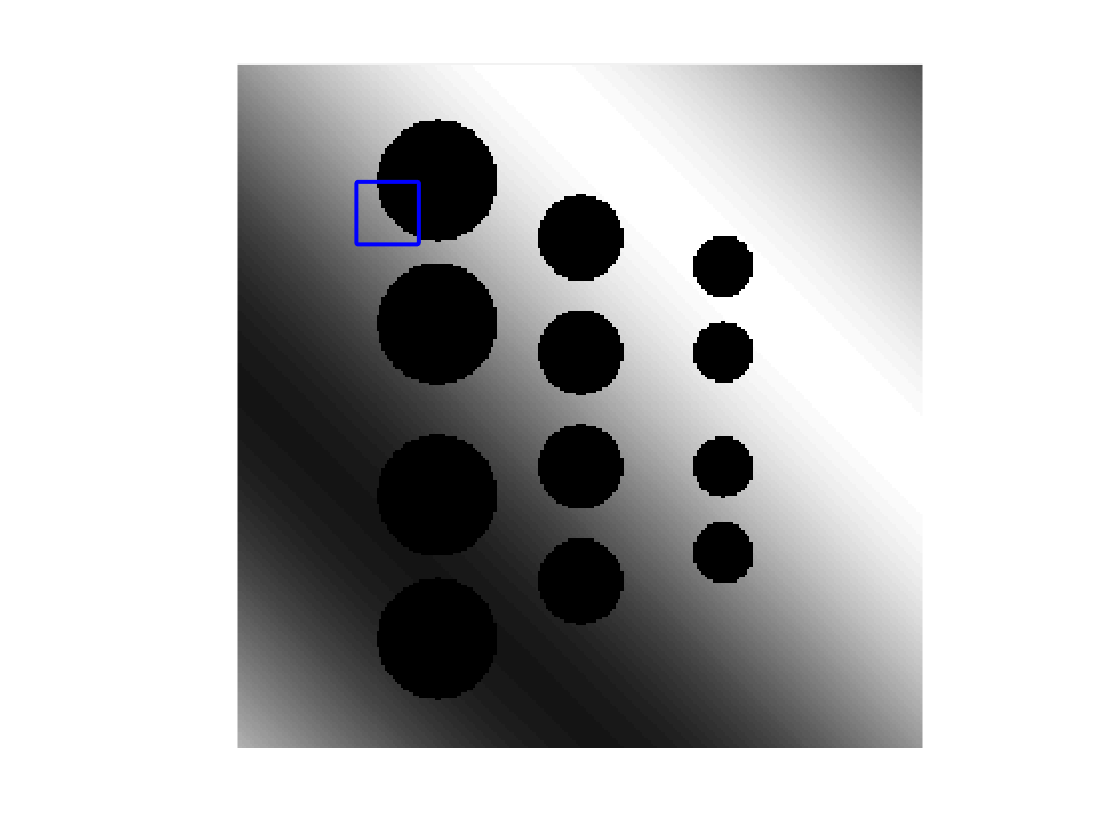}
\includegraphics[width=0.19\textwidth,clip,trim= 6cm 2cm 6cm 2cm]{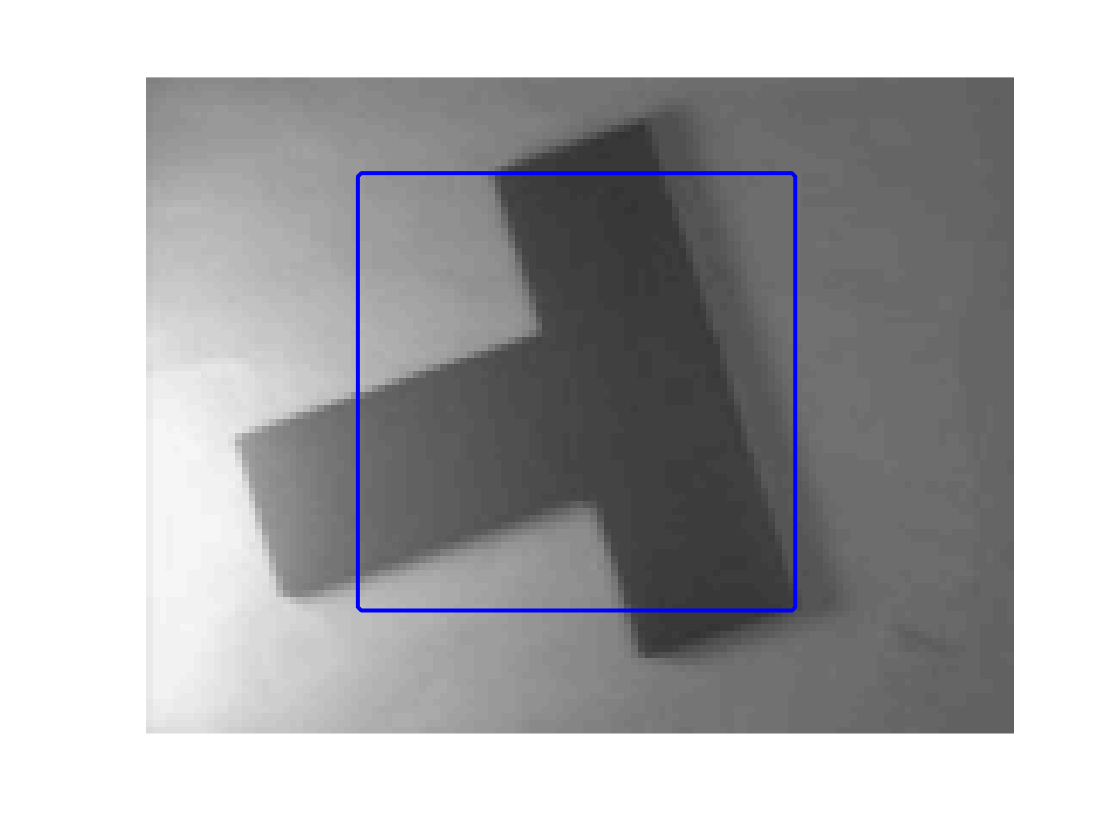}
\includegraphics[width=0.19\textwidth,clip,trim= 6cm 2cm 6cm 2cm]{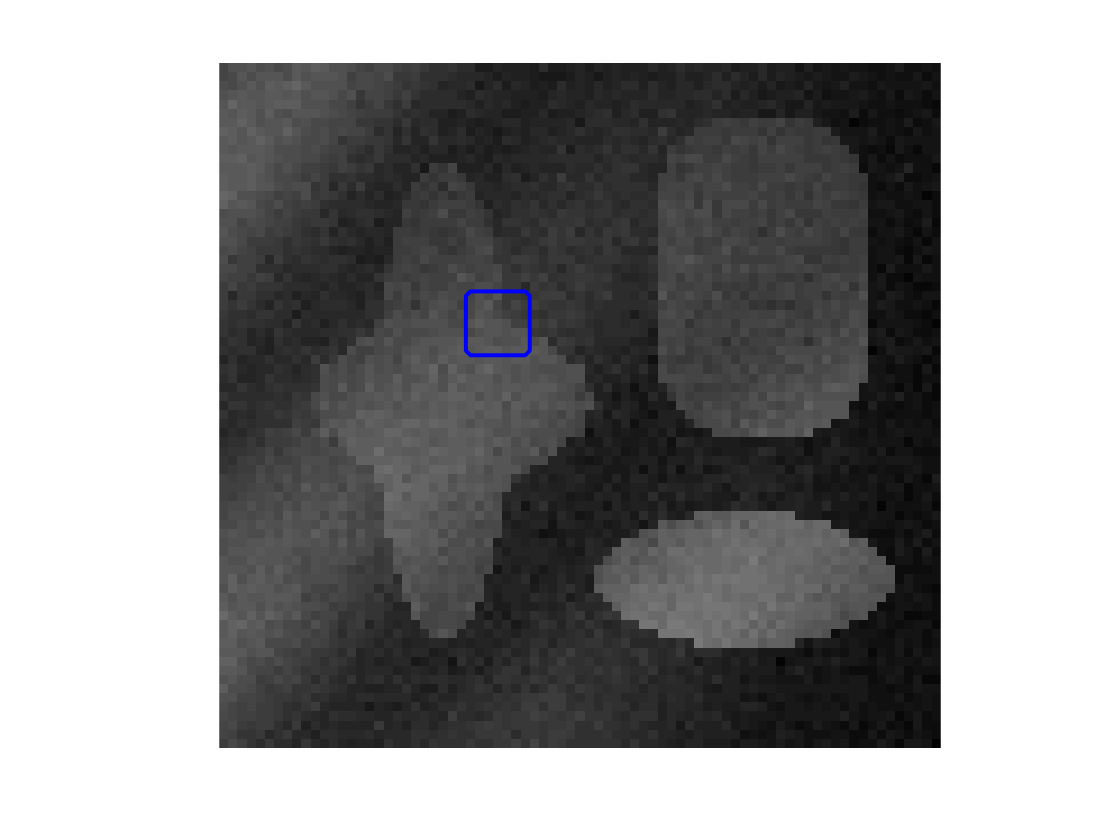}
\includegraphics[width=0.19\textwidth,clip,trim= 6cm 2cm 6cm 2cm]{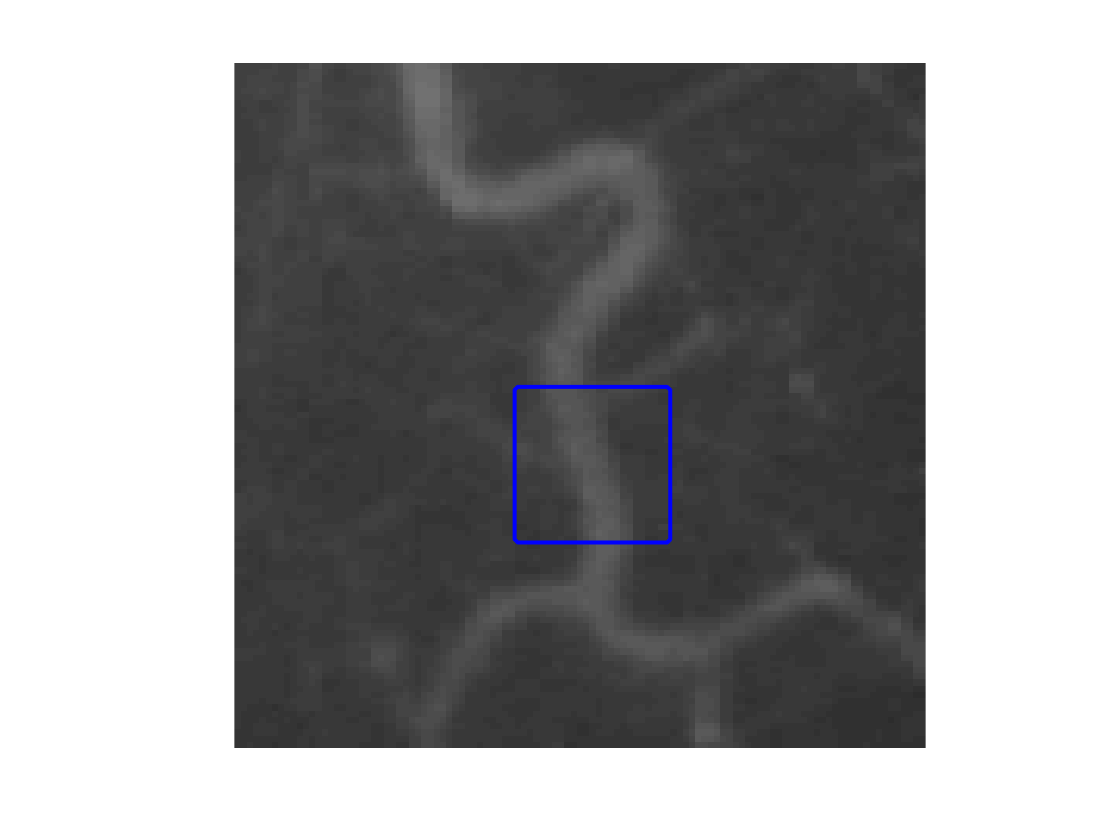}
\includegraphics[width=0.19\textwidth,clip,trim= 6cm 2cm 6cm 2cm]{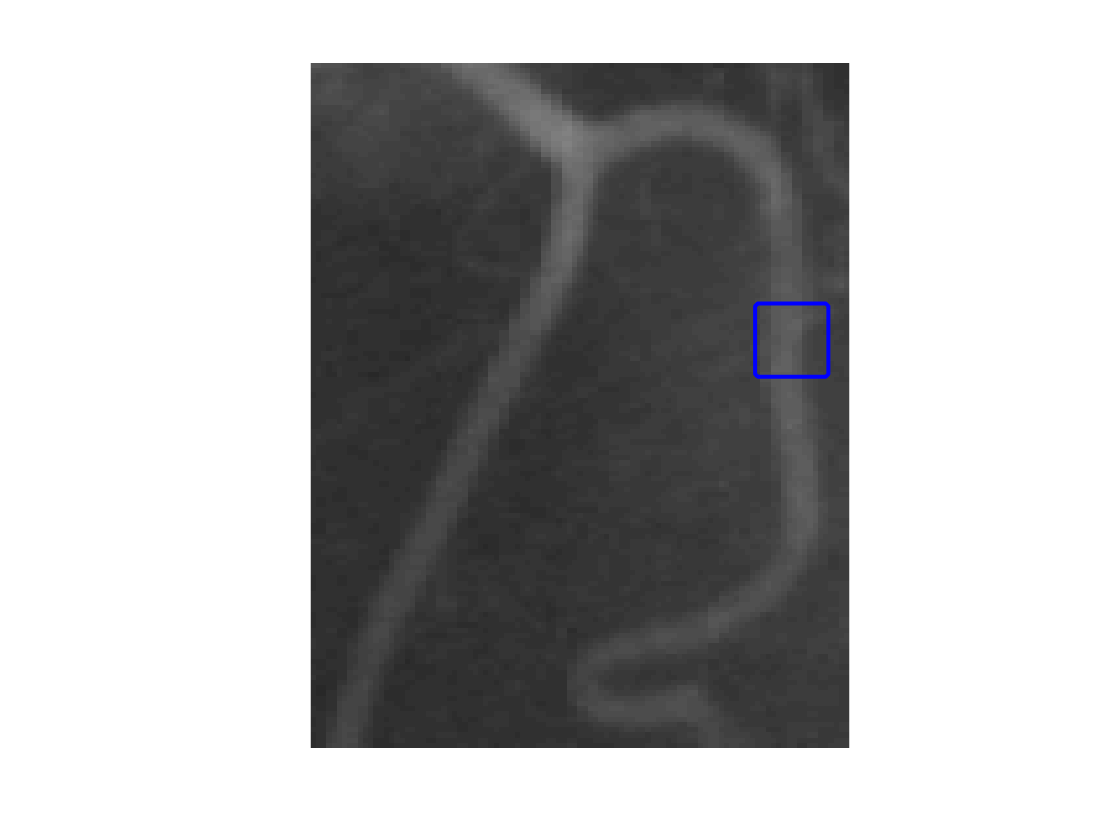}\\
\includegraphics[width=0.19\textwidth,clip,trim= 6cm 2cm 6cm 2cm]{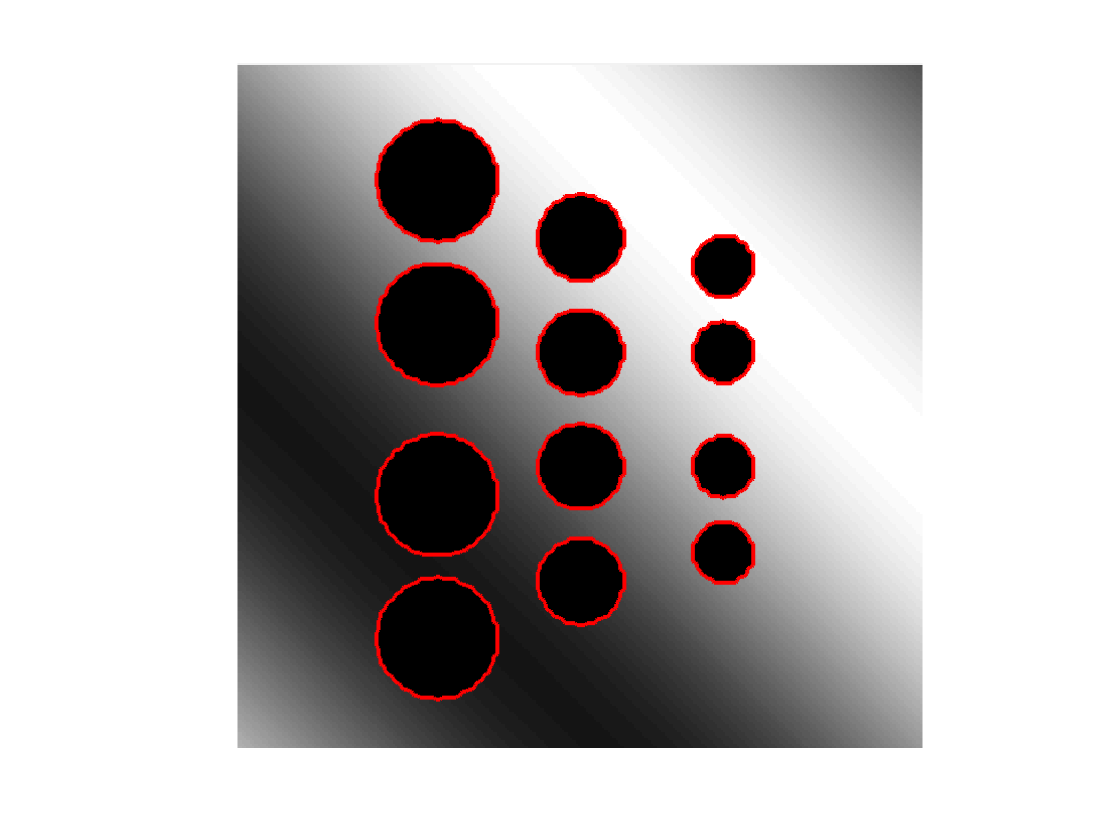}
\includegraphics[width=0.19\textwidth,clip,trim= 6cm 2cm 6cm 2cm]{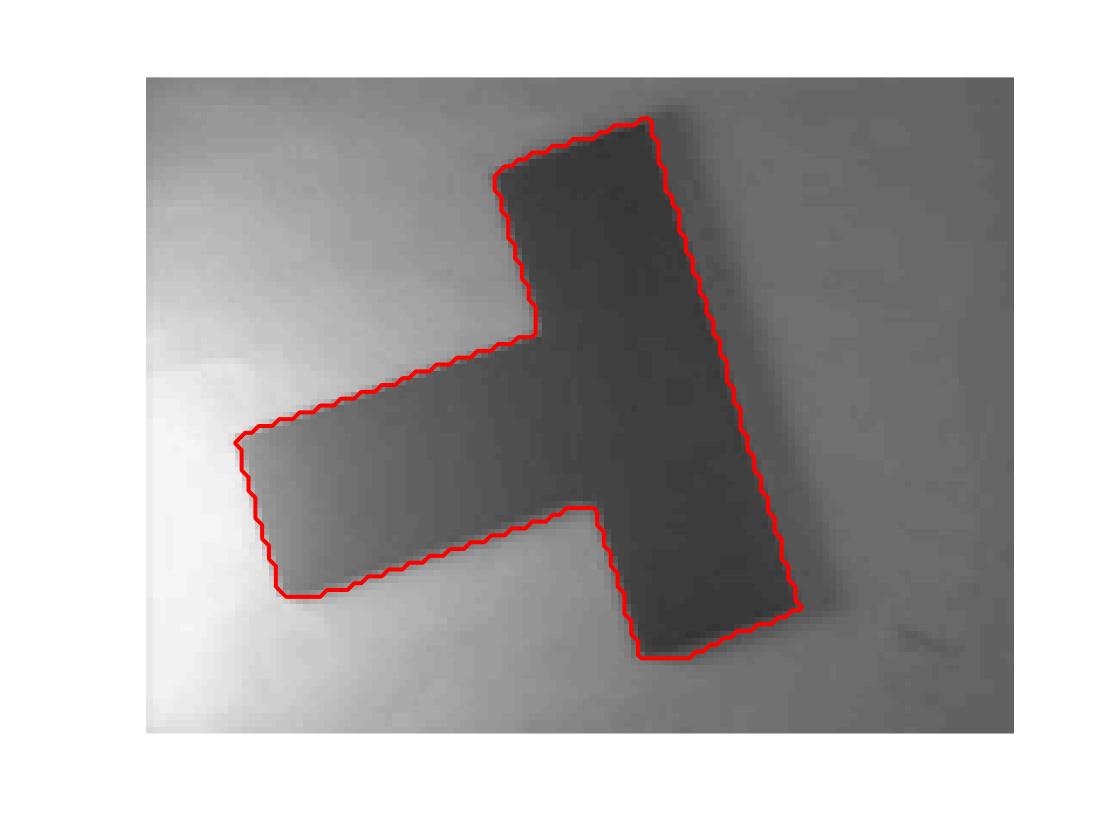}
\includegraphics[width=0.19\textwidth,clip,trim= 6cm 2cm 6cm 2cm]{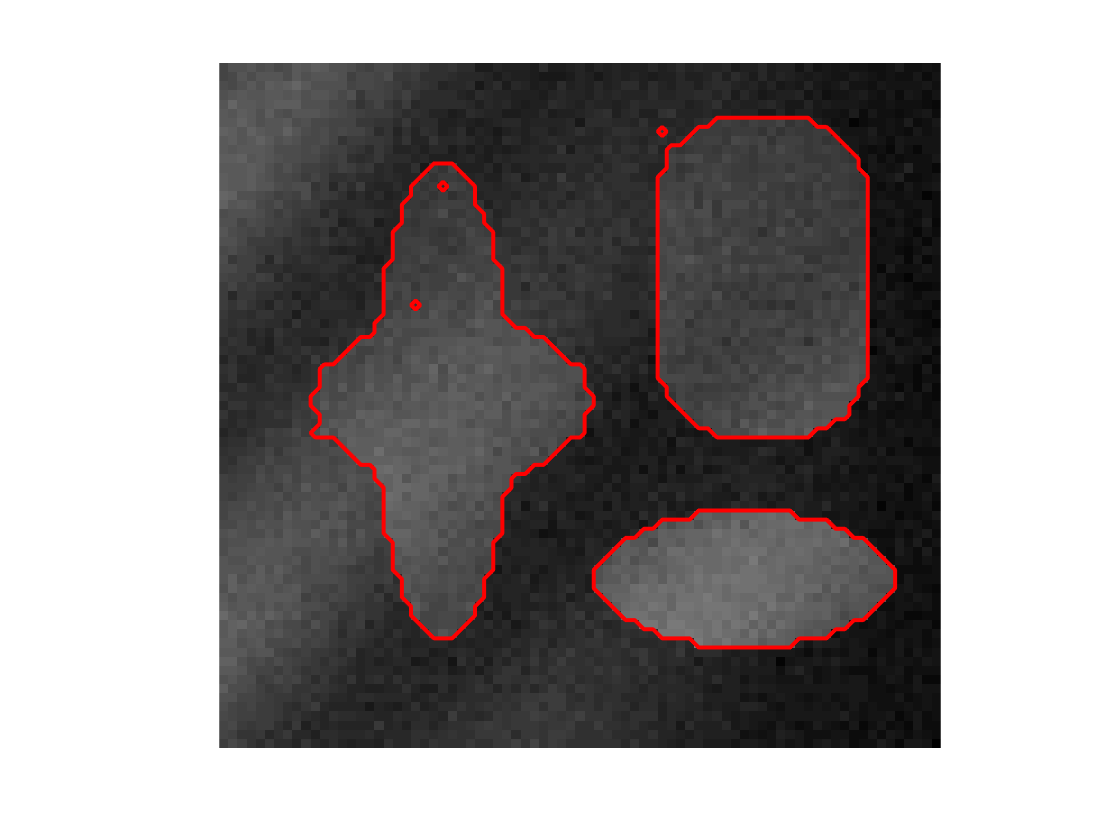}
\includegraphics[width=0.19\textwidth,clip,trim= 6cm 2cm 6cm 2cm]{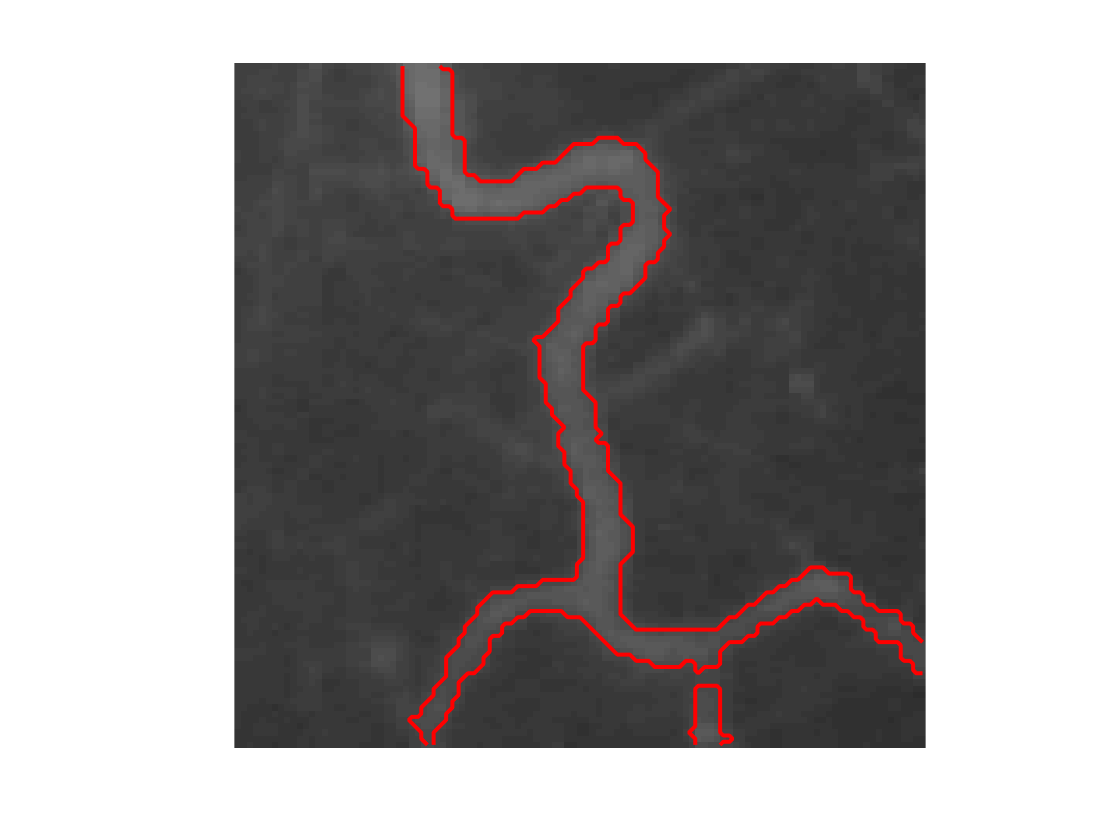}
\includegraphics[width=0.19\textwidth,clip,trim= 6cm 2cm 6cm 2cm]{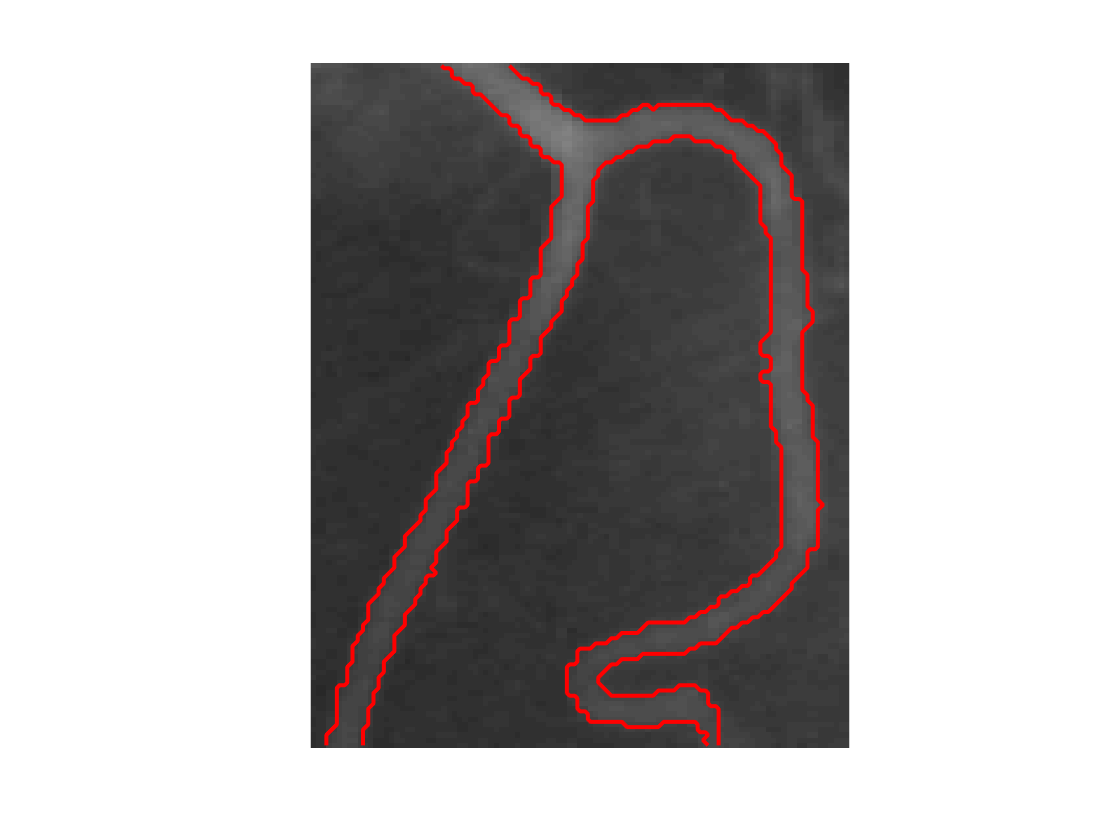}

\medskip

\begin{tabular}{|c|c|c|c|c|c|}
\hline
\# of iterations of the ICTM & 15  & 25 & 43& 28 & 47  \\ 
\hline
\# of iterations of the level-set method \cite{Chunming_Li_2008}& - & 256 & 131 & 117 & 209\\
\hline
\end{tabular}
\caption{Initial contour and segmented region using the ICTM in the LIF model. In all five experiments, $\mu_1 = \mu_2 = 1$.  From left to right: $(\sigma, \tau, \lambda) = (20,15, 500)$, $(3, 5, 150)$, $(3, 3, 245)$, $(3,10,110)$, and $(3,2,90)$. The results for the level-set method are obtained using the software code from \url{http://www.imagecomputing.org/~cmli/code/}. See Section~\ref{subsec:2} for details.} \label{fig:3}
\end{figure*}

\section{Conclusions and discussion} \label{sec:con}

In this paper, we proposed a novel iterative convolution-thresholding method (ICTM) that is applicable to a range of models for image segmentation. We considered the image segmentation as the minimization of a general energy functional consisting of a fidelity term of the image and a regularized term. The interfaces between different segments are implicitly determined by the characteristic functions of the segments. The fidelity term is then written into a linear functional in characteristic functions and the regularized term is approximated by a concave  functional of characteristic functions. We proved the energy-decaying property of the method.  Numerical experiments show that the method is simple, efficient, unconditionally stable, and insensitive to the number of segments.   The ICTM converges in significant fewer iterations than the level-set method for all the examples we tested. We expect that  the ICTM will be applicable to a large class of image segmentation models.



\section*{Acknowledgment}
This research was supported in part by the Hong Kong Research Grants Council GRF grants  16324416 and 16303318.

\appendices


\section{Proof of Theorem \ref{thm:stability}} \label{append2}
The proof consists of two parts: (1) to show that 
\begin{align}\mathcal{E}^\tau(u^{k+1},\Theta^{k})  \leq \mathcal{E}^\tau(u^{k},\Theta^{k})\label{stab11} \end{align}
and (2) to show that 
\begin{align}\mathcal{E}^\tau(u^{k+1},\Theta^{k+1})  \leq \mathcal{E}^\tau(u^{k+1},\Theta^{k}). \label{stab21} \end{align}
\eqref{stab21} is a direct consequence of \eqref{min:theta1}. Therefore we  only need to prove \eqref{stab11}.

To prove \eqref{stab11}, we write
\begin{align*} &\mathcal{L}^{\tau}(f,\Theta^k,u^k,u^k) \\
= &\mathcal{E}^{\tau}(u^k,\Theta^k) 
-  \frac{\lambda\sqrt{\pi}}{\sqrt{\tau}}\int_{\Omega} u^k  G_{\tau}*u^k \ dx 
\end{align*}
and 
\begin{align*} &\mathcal{L}^{\tau}(f,\Theta^k,u^k,u^{k+1})  
= \mathcal{E}^{\tau}(u^{k+1},\Theta^k) \\
&+ \frac{\lambda\sqrt{\pi}}{\sqrt{\tau}} \int_{\Omega} u^{k+1} G_{\tau}*(u^{k+1}-2u^k) \ dx.\end{align*} 
From \eqref{min1}, we have \[\mathcal{L}^{\tau}(f,\Theta^k,u^k,u^{k+1}) \leq \mathcal{L}^{\tau}(f,\Theta^k,u^k,u^k) .\] That is,
\begin{align}
&\mathcal{E}^{\tau}(u^{k+1},\Theta^k) \nonumber  \\
\leq &\mathcal{E}^{\tau}(u^k,\Theta^k)\label{stab3} \\
&- \frac{\lambda\sqrt{\pi}}{\sqrt{\tau}}\int_{\Omega} (u^k-u^{k+1}) G_\tau *(u^k-u^{k+1}) \ dx \nonumber \\
 = & \mathcal{E}^{\tau}(u^k,\Theta^k)-\frac{\lambda\sqrt{\pi}}{\sqrt{\tau}}  \int_{\Omega} \left[G_{\tau/2} *(u^k-u^{k+1})\right]^2 \ dx \nonumber\\
 \leq& \mathcal{E}^{\tau}(u^k,\Theta^k). \nonumber
\end{align}

\section{Proof of Theorem \ref{thm:stability2}} \label{append3}
Similar to the proof in Appendix~\ref{append2}, we only need to prove 
\begin{align}\mathcal{E}^\tau(u^{k+1},\Theta^{k})  \leq \mathcal{E}^\tau(u^{k},\Theta^{k}).\label{stab1} \end{align}
Again, we write
\begin{align*} &\mathcal{L}^{\tau}(f,\Theta^k,u^k,u^k) \\
= &\mathcal{E}^{\tau}(u^k,\Theta^k) 
+\frac{\lambda\sqrt{\pi}}{\sqrt{\tau}} \sum\limits_{i=1}^n\sum\limits_{j\neq i, j=1}^n \int_{\Omega} u_i^k  G_{\tau}*u_j^k \ dx 
\end{align*}
and 
\begin{align*} &\mathcal{L}^{\tau}(f,\Theta^k,u^k,u^{k+1})  
= \mathcal{E}^{\tau}(u^{k+1},\Theta^k) \\
&-  \frac{\lambda\sqrt{\pi}}{\sqrt{\tau}}\sum\limits_{i=1}^n\sum\limits_{j\neq i, j=1}^n\int_{\Omega} u_i^{k+1} G_{\tau}*u_j^{k+1} \ dx  \\
&+2\frac{\lambda\sqrt{\pi}}{\sqrt{\tau}} \sum\limits_{i=1}^n  \sum\limits_{j\neq i, j=1}^n\int_{\Omega}u_i^{k+1}G_{\tau}*u_j^k \ dx .\end{align*} 
From \eqref{min}, we have \[\mathcal{L}^{\tau}(f,\Theta^k,u^k,u^{k+1}) \leq \mathcal{L}^{\tau}(f,\Theta^k,u^k,u^k) .\] That is,
\begin{align}
&\mathcal{E}^{\tau}(u^{k+1},\Theta^k) \leq \mathcal{E}^{\tau}(u^k,\Theta^k) \label{stab3} \\
&+\sum\limits_{i=1}^{n}\sum\limits_{ j=1,j\neq i}^{n}\int_{\Omega} \frac{\lambda\sqrt{\pi}}{\sqrt{\tau}} (u_i^k-u_i^{k+1}) G_{\tau} *(u_j^k-u_j^{k+1}) \ dx. \nonumber
\end{align}
Direct calculation yields
\begin{align}
&\sum\limits_{i=1}^{n}\sum\limits_{ j=1,j\neq i}^{n}\int_{\Omega} \frac{\lambda\sqrt{\pi}}{\sqrt{\tau}} (u_i^k-u_i^{k+1}) G_{\tau} *(u_j^k-u_j^{k+1})\ dx \nonumber\\
 =&\sum\limits_{i=1}^{n}\int_{\Omega} \frac{\lambda\sqrt{\pi}}{\sqrt{\tau}} (u_i^k-u_i^{k+1}) G_{\tau} *\sum\limits_{ j=1,j\neq i}^{n}(u_j^k-u_j^{k+1})  dx\nonumber \\
=&\sum\limits_{i=1}^{n}\int_{\Omega} \frac{\lambda\sqrt{\pi}}{\sqrt{\tau}} (u_i^k-u_i^{k+1}) G_{\tau} *(u_i^{k+1}-u_i^k) \ dx   \nonumber  \\
=& - \sum\limits_{i=1}^{n}\int_{\Omega} \frac{\lambda\sqrt{\pi}}{\sqrt{\tau}}\left[G_{\tau/2} *(u_i^{k+1}-u_i^k) \right]^2\ dx\label{stab4} \\
 \leq & 0. \nonumber
\end{align}
Combining \eqref{stab3} and \eqref{stab4} gives
\[\mathcal{E}^{\tau}(u^{k+1},\Theta^k)  \leq  \mathcal{E}^{\tau}(u^k,\Theta^k).\]

\bibliographystyle{IEEEtran}
\bibliography{WW19.bib}

\begin{IEEEbiography}{Dong Wang}
Dr. Wang received his bachelor's degree in mathematics from Sichuan University in 2013 and his Ph.D. degree in mathematics from the Hong Kong University of Science and Technology in 2017. He is now a postdoc fellow in the Department of Mathematics at the University of Utah.  
\end{IEEEbiography}

\begin{IEEEbiography}{Xiao-Ping Wang}
Prof.  Wang received his bachelor's degree in mathematics from Peking University in 1984 and his Ph.D. degree in mathematics from the Courant Institute (New York University) in 1990. He was a postdoctoral at MSRI in Berkeley, University of Colorado in Boulder before he came to HKUST.  Prof. Wang is currently the Head and  Chair Professor of Mathematics at the Hong Kong University of Science and Technology.    He received Feng Kang Prize of Scientific Computing in 2007 and was a plenary speaker at the SIAM conference on mathematical aspects of materials science 2016 and an invited seaker at the ICIAM 2019.
\end{IEEEbiography}

%
%
%




\end{document}